\documentclass{article} 
\usepackage{iclr2027_conference,times}

\usepackage{amsmath,amsfonts,bm}

\def\eqref#1{equation~\ref{#1}}

\def\1{\bm{1}}

\DeclareMathAlphabet{\mathsfit}{\encodingdefault}{\sfdefault}{m}{sl}
\SetMathAlphabet{\mathsfit}{bold}{\encodingdefault}{\sfdefault}{bx}{n}

\newcommand{\E}{\mathbb{E}}

\newcommand{\R}{\mathbb{R}}

\usepackage{hyperref}
\usepackage{url}

\usepackage[pdftex]{graphicx}  

\usepackage{float}
\usepackage{wrapfig}
\usepackage{algorithm}
\usepackage{algorithmic}

\usepackage{booktabs}  
\usepackage{tabularx}  

\usepackage{xcolor}  

\usepackage[font=normalsize,labelformat=simple]{subcaption}

\usepackage{listings}

\usepackage{amsmath,amssymb,amsthm}

\lstdefinestyle{pythoncode}{
    language=Python,
    basicstyle=\ttfamily\normalsize,
    columns=fullflexible,
    keepspaces=true,
    breaklines=true,
    showstringspaces=false,
    tabsize=4
}

\title{AdamPM: Memory-Efficient LLM Optimization via Moment Pooling}

\author{
Xiaowei Ji$^{1}$ \quad
Lishan Xu$^{1}$ \quad
Jianing An$^{1}$ \quad
Jie Luo$^{1}$ \quad
Wenjun Wu$^{1,2}$ \quad
Lei Huang$^{1,2}$\thanks{Corresponding author.} \\
$^{1}$SKLCCSE, School of Artificial Intelligence, Beihang University, Beijing, China \\
$^{2}$Hangzhou International Innovation Institute, Beihang University, Hangzhou, China \\
\texttt{\{jixiaowei,lishanxu,anjianing,wwj09315,luojie,huangleiai\}@buaa.edu.cn} \\
}

\newtheorem{theorem}{Theorem}[section]
\newtheorem{corollary}[theorem]{Corollary}
\newtheorem{lemma}[theorem]{Lemma}

\iclrfinalcopy 

\begin{document}

\maketitle

\begin{abstract}
Adam is widely used for training large language models (LLMs) owing to its strong optimization performance and training stability, but its element-wise estimation for the first and second moments incur substantial  memory overhead. 
Although numerous memory-efficient optimizers have been proposed, many incur additional computational overhead or require additional tuning of hyperparameters. 
Consequently, Adam remains a dominant and reliable choice for LLM pre-training. 
We propose AdamPM (Adam with Pooled Moments), a simple and memory-efficient variant of Adam that groups adjacent moment entries along the row or column dimension and represents each group by a single shared mean. 
AdamPM can well inherit Adam's hyperparameter settings and is readily integrated into existing Adam optimizer implementations, with less memory overhead. 
We systematically investigate pooling of both first and second moment estimates and identify second moment pooling as the effective design. 
With a group size of $s$, AdamPM reduces second moment memory to $1/s$ of the original requirement. 
Extensive experiments on language model pretraining, fine-tuning and multimodal learning demonstrate that AdamPM maintains model performance and training speed while substantially reducing optimizer-state memory. 
Our code is available at \url{https://github.com/jixiaowei95/AdamPM}.
\end{abstract}

\section{Introduction}

Adam~\citep{a:Adam} and its decoupled weight-decay variant AdamW~\citep{a:7}, which combine momentum with adaptive learning rates, have become the de facto standard for LLM training due to their strong convergence and stability. 
However, these advantages come at a substantial memory cost: 
Adam maintains first and second moment estimates for each parameter, requiring storage for two additional optimizer states per parameter.

As LLMs scale to trillions of parameters, this overhead becomes increasingly prohibitive.
Against this backdrop, memory-efficient optimizers have emerged as an important research direction, aiming to reduce training memory consumption through optimizer design while preserving the performance of conventional adaptive methods.
Representative approaches achieve this goal by  projecting gradients into low-rank subspaces~\citep{a:10,a:48}, factorizing the second moment state~\citep{a:38},  sharing the moment statistics~\citep{a:26,a:FOAM}, and  employing orthogonalization~\citep{a:56} or sign-based update rules~\citep{a:59} to eliminate the second moment state.

However, a recent survey from Stanford University~\citep{a:11} notes that, despite numerous optimizers claiming faster training or lower memory consumption, 
Adam remains the robust default for LLM pre-training, with its primary limitation being the memory overhead of optimizer states. This raises a natural question:

\textit{Before an optimizer consistently surpasses Adam in both performance and robustness, would a memory-efficient variant with only minimal modifications to Adam be a better practical choice?}

In this work, we systematically analyze the correlations and redundancy in Adam's optimizer states, and propose AdamPM, a minimally modified Adam variant that groups adjacent second moment entries and shares one statistic within each group.
Our main contributions are summarized as follows:

\begin{itemize}
\item We propose AdamPM, a memory-efficient optimizer for LLM training that reduces optimizer-state memory without sacrificing model performance or increasing training time. AdamPM makes only a minimal modification to vanilla Adam by pooling the second moment state. It can directly reuse Adam's hyperparameters, including the learning rate and decay rates, and is fully model-agnostic. These characteristics of AdamPM are  validated by experiments on LLM pre-training, language-model and vision-language model fine-tuning. 
\item We systematically investigate the pooling of both first and second moments through experiments.  
We find that the first moment contains substantially more information and is highly sensitive to pooling, explaining why most memory-efficient optimizers primarily compress the second moment. 
We introduce cosine similarity and relative error to quantify deviations from vanilla Adam, and show that AdamPM’s preconditioners and parameter updates become more similar to those of Adam as model scale grows, suggesting that AdamPM offers greater memory-efficiency benefits over Adam for larger models.
\end{itemize}

\section{Related Work}
\label{RelatedWork}

As LLMs scale to trillions of parameters, memory-efficient training has become a research priority. 
Existing memory-efficient optimizers reduce memory overhead primarily through matrix factorization, low-rank projection, constraint-based transformations, and parameter sharing.


\textbf{Matrix factorization.} These methods address reduction of memory by storing the factorized matrices, and the main differences involve to the optimization targets. 
For example, Adafactor~\citep{a:27} and SMMF~\citep{a:41} factorize optimizer states, the LoRA~\citep{a:25} family factorizes weight updates, and LORO~\citep{a:46} factorizes weight matrices. 
Although these methods improve memory efficiency, some are unsuitable for pre-training and limited to fine-tuning (e.g., LoRA), or 
have not yet been validated as extensively as Adam on LLMs with billions of parameters for pre-training (e.g., Adafactor and SMMF).
Our AdamPM, like Adam, works well for both pre-training and fine-tuning, and is validated by experiments on large-scale LLMs and vision-language models.

\textbf{Low-rank projection.} These methods aim at reducing memory by maintaining optimizer states in lower-dimensional spaces. 
They achieve this goal by random projections in APOLLO~\citep{a:10}, or by gradient projection based on either SVD in  GaLore~\citep{a:8} and Fira~\citep{a:47}, or online PCA~\citep{a:49}.  Although these methods reduce memory consumption,
they also incur additional computational overhead, such as periodic SVD computations in GaLore~\citep{a:8}, which increase training time. Besides, they often introduce new hyperparameters that are difficult to tune, such as the frequency of periodic subspace updates in Fira~\citep{a:47}. Our proposed AdamPM makes only minimal changes to Adam and preserves its desirable optimization properties,  while its parameter updates become increasingly similar to Adam's as model size grows, as demonstrated in 
Figure~\ref{figExperimentAnalysis} (d).

\textbf{Constraint-based transformations.} These methods aim to eliminate the need to store certain optimizer states by introducing constraint-based transformations, e.g., 
imposing orthogonality constraints on optimizer states in Muon~\citep{a:56} and SUMO~\citep{a:58}, normalizing gradients along the output feature dimension in SCALE~\citep{a:54}, and using sign-based momentum updates in Lion~\citep{a:59} and OLion~\citep{a:60}. 
These optimizers substantially modify Adam, preventing direct reuse of its hyperparameters and requiring additional tuning. AdamPM  makes only minimal changes to Adam by using moment pooling, and can directly inherit Adam’s hyperparameters, including the learning rate and
decay rates.

\textbf{Parameter sharing.}
Adam-mini~\citep{a:26} and FOAM~\citep{a:FOAM} also reduce optimizer-state memory by sharing moment estimates.
However, compared with Adam-mini~\citep{a:26}, AdamPM offers a more general grouping scheme, pooling adjacent second moment entries along rows or columns with a configurable group size $s$. 
Moreover, AdamPM is model-agnostic, whereas Adam-mini relies on model-specific information such as the hidden dimension and attention-head count.
AdamPM also differs from FOAM~\citep{a:FOAM}, which restricts sharing to power-of-two-sized groups of adjacent elements within each row, applies the same compression to first moments, and introduces residual correction. 
These additional operations complicate optimizer instantiation, and FOAM cannot adopt Adam’s hyperparameter settings.
Our AdamPM is fully model-agnostic, and can directly reuse Adam’s hyperparameters.

\section{Methodology}
\label{Methodology}

We first revisit the Adam optimizer in Section~\ref{VanillaAdam} and examine the local similarity of second moment statistics in Section~\ref{LocalSimilarity}. Motivated by these observations, we introduce AdamPM in Section~\ref{AdamPMOptimizer} and analyze its convergence in Section~\ref{sec:adampm-theory}.

\subsection{Vanilla Adam}
\label{VanillaAdam}


Given the stochastic gradients $G_t$ computed on a mini-batch, Adam maintains the first and second moment estimates, $M_t$ and $V_t$, using exponential moving averages (EMA).
In particular, it updates the weight $W_t$ as follows:
\begin{equation}
\begin{array}{l}
M_t=\beta_1M_{t-1}+(1-\beta_1)G_t, \\[3pt]
V_t=\beta_2V_{t-1}+(1-\beta_2)G_t\odot G_t, \\[3pt]
\hat{M_t} = \frac{M_t}{1-\beta_1^t}, \quad
\hat{V_t} = \frac{V_t}{1-\beta_2^t}, \\[3pt]
W_t = W_{t-1} - \eta_t \cdot
\frac{\widehat{M}_t}{\sqrt{\widehat{V}_t}+\epsilon}.
\end{array}
\label{eq:adam}
\end{equation}
where $\hat{M_t}$ and $\hat{V_t}$ are bias-corrections of the first and second moments respectively,
$\beta_1,\beta_2\in[0,1)$ are decay rates,
$\eta_t>0$ is the learning rate, and $\epsilon>0$ ensures numerical stability.
All operations are performed elementwise.

\begin{figure}[t]
\centering
\includegraphics[width=1.0\linewidth]{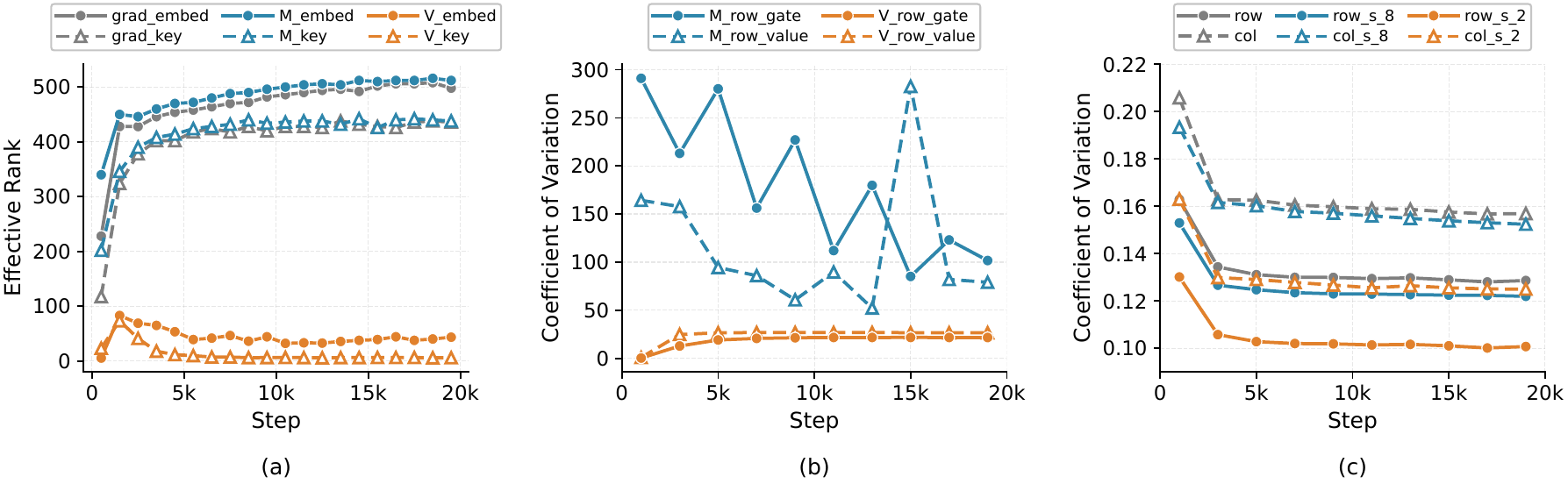}
\par
\setlength{\abovecaptionskip}{4pt}  
\caption{Evolution of effective ranks and coefficients of variation during LLaMA 130M pre-training. (a) Effective ranks of the gradients, first and second moments for \texttt{embed} and \texttt{key}. (b) Coefficients of variation of the first and second moments for \texttt{value} and \texttt{gate}. (c) Coefficients of variation of the preconditioner for \texttt{gate}. 
}
\label{figEffectiveRankAndCoefficientOfVariation}

\vspace{-8pt}

\end{figure}

\subsection{Local Similarity of Second Moment Statistics}
\label{LocalSimilarity}

Modern Transformer-based large language models widely adopt LayerNorm~\citep{a:LN} and RMSNorm~\citep{a:RMSNorm}. By reducing scale discrepancies across different weights and rescaling the loss landscape, normalization partially plays a role similar to adaptive learning-rate adjustment, which is achieved in Adam through second moment estimation. Consequently, the effective learning rates of different parameters tend to become more similar.

We examine the structural redundancy in Adam's optimizer states during the pretraining of LLaMA 130M model~\citep{a:73}. 
For illustration, Figure~\ref{figEffectiveRankAndCoefficientOfVariation} (a) reports the effective ranks of the gradient, first and second moments for \texttt{embed} and the fifth Transformer block's \texttt{key}. 
We define the effective rank of $A$ as $\operatorname{erank}(A)=\exp(-\sum_i p_i\log p_i)$, where $p_i=\sigma_i/\sum_j\sigma_j$ and $\sigma_i$ denotes its $i$-th singular value.  
Throughout training, the second moment consistently exhibits a substantially lower effective rank, indicating that its information is concentrated in only a few dominant directions. In contrast, the gradient and first moment contain richer directional information, with the first moment exhibiting an even higher effective rank than the gradient.
This observation is consistent with their roles in Adam: the first moment primarily determines the update direction, whereas the second moment controls the coordinate-wise update magnitude through the diagonal preconditioner. The second moment can tolerate moderate approximation as long as the resulting update scale remains sufficiently accurate. 
Adam-mini~\citep{a:26} also supports this view by sharing a single second moment statistic within each parameter block while retaining competitive performance. 


To assess local similarity during LLaMA 130M pretraining, we compute the coefficient of variation (CV) for the first moment, second moment, and diagonal preconditioner. 
For entries $\mathcal{X}$, CV is defined as $\mathrm{CV}(\mathcal{X})=\sigma_{\mathcal{X}}/|\mu_{\mathcal{X}}|$, where $\mu_{\mathcal{X}}$ and $\sigma_{\mathcal{X}}$ are their mean and standard deviation, respectively.

Figure~\ref{figEffectiveRankAndCoefficientOfVariation} (b) compares the CVs of the first and second moments for \texttt{value} and the MLP gate projection matrix \texttt{gate} in the fifth Transformer block, while Figure~\ref{figEffectiveRankAndCoefficientOfVariation} (c) reports the preconditioner CVs for \texttt{gate}. 
The labels \texttt{row} and \texttt{col} denote mean row-wise and column-wise CVs, respectively, while \texttt{s\_}$n$ denotes the mean CV across non-overlapping groups of $n$ adjacent elements along the corresponding dimension.
We observe that the first moment exhibits substantially higher CVs because it contains both positive and negative values and carries richer directional information, whereas the second moment has much lower CVs. 
Moreover, parameter updates depend directly on the diagonal preconditioner $\frac{1}{\sqrt{\widehat{V}_t}+\epsilon}$, whose inverse-square-root transformation further reduces this variation. The preconditioner exhibits consistently low CVs throughout training, indicating strong local similarity among neighboring elements, with even lower variation in smaller group sizes.

\subsection{AdamPM Optimizer}
\label{AdamPMOptimizer}

These observations motivate AdamPM, which reduces optimizer-state memory by applying \textbf{group-wise moment pooling} to Adam's second moment.
AdamPM pools every $s$ adjacent second moment entries within rows or columns, where $s$ denotes the group size.
By default, AdamPM performs pooling within each row, with rows corresponding to output features. In particular, 
given a weight matrix $W\in\mathbb{R}^{m\times n}$, each row is partitioned into $q=n/s$ groups if $n$ is divisible by $s$.
When $n$ is not divisible by $s$, the final group of squared gradients is padded with its mean.

\begin{algorithm}[htbp]
\caption{AdamPM Optimizer}
\label{alg:AdamPM}
\textbf{Input}: Initialized parameters $W_0 \in \mathbb{R}^{m \times n}$, total steps $T$, group size $s$, learning rate $\eta_t$, decay rates $\{\beta_1$, $\beta_2\}$, and numerical stabilization constant $\epsilon$.

\begin{algorithmic} 

\STATE Initialize $t \leftarrow 0$, $q \leftarrow n/s$
\hfill \{$q$ is the number of groups, assuming $s \mid n$\}

\STATE Initialize $M_0 \leftarrow \mathbf{0}_{m \times n}$, $\widetilde{V}_0 \leftarrow \mathbf{0}_{m \times q}$
\hfill \{First moment and compressed second moment\}

\setlength{\algorithmicindent}{2em}
\REPEAT
\setlength{\itemsep}{2pt}
    \STATE $t \leftarrow t+1$
    \STATE $G_t \leftarrow \nabla_W \mathcal{L}(W_{t-1})$
    \hfill \{Compute the gradient  $G_t$\}
    
    \STATE $M_t \leftarrow \beta_1 M_{t-1}+(1-\beta_1)G_t$
    \hfill \{Update the first moment $M_t$\}

    \STATE $S_t=G_t\odot G_t \in\mathbb{R}^{m\times n}$
    \hfill \{Compute the squared gradient\}

    \STATE $R_t \leftarrow S_t.\mathrm{reshape}\!\left(m,q,s\right) \in\mathbb{R}^{m\times q\times s}$
    \hfill \{Reshape the  squared gradient\}

\STATE {\setlength{\fboxsep}{1pt}
\colorbox{orange!35}{$\overline{S}_t \leftarrow \frac{1}{s}\sum_{k=1}^{s} \left(R_t\right)_{:,:,k} $}}
$\in \mathbb{R} ^ {m \times q}$
\hfill \{Mean within each group\}

    \STATE $\widetilde{V}_t \leftarrow \beta_2\widetilde{V}_{t-1} +(1-\beta_2)\overline{S}_t   \in \mathbb{R} ^ {m \times q}$
    \hfill \{Update the compressed second moment $\widetilde{V}_t$\}

    \STATE $\widehat{M}_t \leftarrow \frac{M_t}{1-\beta_1^t}$
    \hfill \{Correct bias in the first moment\}

    \STATE $\widehat{\widetilde{V}}_t \leftarrow \frac{\widetilde{V}_t}{1-\beta_2^t}$
    \hfill \{Correct bias in the compressed second moment\}

	\STATE {\setlength{\fboxsep}{1pt}
	\colorbox{orange!35}{$\widehat{V}_t \leftarrow \widehat{\widetilde{V}}_t \otimes \mathbf{1}_{1 \times s}   $}}  
    $\in\mathbb{R}^{m\times n}$
	\hfill \{Broadcast each group; $\otimes$ denotes the Kronecker product\}

    \STATE $W_t \leftarrow W_{t-1}   -\eta_t \cdot    \frac{\widehat{M}_t}    {\sqrt{\widehat{V}_t}+\epsilon}$
    \hfill \{Update weights\}

\UNTIL{$t=T$ or convergence}

\STATE \textbf{return} $W_t$

\end{algorithmic}
\end{algorithm}

The complete procedure is presented in Algorithm~\ref{alg:AdamPM}.
AdamPM performs moment pooling as follows. It first computes the element-wise squared gradient and reshapes it into $m\times q$ groups of $s$ adjacent elements. Then AdamPM computes the group means $\overline{S}_t$ and applies an EMA to obtain the compressed second moment $\widetilde{V}_t$. After bias correction, each group statistic is temporarily broadcast to its corresponding $s$ elements before the parameter update.

AdamPM pools second moments for all weight matrices, including \texttt{embed}, \texttt{lm\_head}, attention, and MLP weights. One-dimensional parameters, such as RMSNorm weights, are pooled as single-row or single-column matrices. Note that, unlike FOAM~\citep{a:FOAM}, AdamPM does not pool the first moment and follows Adam's element-wise update, since pooling the first moment naively leads to substantial performance degradation (Section~\ref{AnalysisofPreTrainingExperiments}).

The order of squared-gradient computation, pooling, and EMA accumulation is crucial for memory efficiency, 
as it is the EMA that incurs persistent optimizer-state storage overhead. 
By averaging squared gradients within each group before applying the EMA, 
AdamPM maintains only the compressed second-moment state $\widetilde{V}_t\in\mathbb{R}^{m\times q}$ throughout training, 
reducing second-moment memory usage to $1/s$ of that required by Adam. 
It therefore stores $mn$ first moment and $mn/s$ second moment elements, totaling $mn(1+1/s)$ optimizer-state elements, compared with $2mn$ for Adam.



\subsection{Convergence Analysis}
\label{sec:adampm-theory}

This section presents a finite-time stationarity analysis of AdamPM.
We assume that the objective is lower bounded and $L$-smooth. We further assume that stochastic gradients are conditionally unbiased with bounded noise variance and that the true gradient at each iterate is almost surely bounded in norm.

\begin{theorem}[Stationarity bound]
\label{thm:adampm-main}
Under the above conditions and with $\eta_t=\frac{\eta_0}{\sqrt t}$, we have

\begin{equation}
\begin{aligned}
\min_{1\le t\le T}\E\|\nabla f(W_t)\|_2^2
\le{}&K\frac{1+\log T}{\sqrt T}
+\left(3+\frac{4C}{\epsilon}\right)\sigma^2,
\end{aligned}
\label{eq:adampm-main-rate}
\end{equation}
where  $C$ and $\sigma^2$ are upper bounds on the norm of the true gradient and the noise variance respectively, $K$ is finite and independent of time step $T$ and row-wise group size $s$.
\end{theorem}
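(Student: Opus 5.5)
We would follow the standard descent-lemma route for adaptive methods with a bounded diagonal preconditioner, in the style of analyses of RMSProp/Adam with $\epsilon>0$. Write the AdamPM step as $W_t=W_{t-1}-\eta_t D_t\odot \widehat M_t$, where $D_t=1/(\sqrt{\widehat V_t}+\epsilon)$ and $\widehat V_t$ is the broadcast pooled second moment. The first observation is that pooling preserves the key structural bounds. Each entry of $\overline S_t$ is an average of squared gradient entries in one row-group, so it is at most $\|G_t\|_\infty^2\le\|G_t\|_2^2$. Hence every entry of $\widehat V_t$ is a convex combination of such averages. This gives the bounds
\begin{equation}
0\le (\widehat V_t)_{ij}\le \max_{\tau\le t}\|G_\tau\|_2^2 ,\qquad \frac{1}{\sqrt{\widehat V_t}+\epsilon}\in\Big[\frac{1}{\sqrt{\max_\tau\|G_\tau\|^2}+\epsilon},\,\frac1\epsilon\Big].
\end{equation}
Neither bound depends on $s$. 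This is exactly why $K$ will be independent of the group size: the proof only ever uses entrywise upper and lower bounds on $D_t$, never its fine structure.

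Step 1 is to apply $L$-smoothness:
\begin{equation}
f(W_t)\le f(W_{t-1})-\eta_t\langle\nabla f(W_{t-1}),D_t\odot\widehat M_t\rangle+\frac{L\eta_t^2}{2\epsilon^2}\|\widehat M_t\|^2 .
\end{equation}
Step 2 handles the correlation between $D_t$ and $G_t$. We replace $D_t$ by the $\mathcal F_{t-1}$-measurable surrogate $D_{t-1}$, or equivalently use $\widetilde V_{t-1}$. The difference $D_t-D_{t-1}$ is controlled because the pooled EMA changes by at most $(1-\beta_2)$ times a bounded quantity. This produces a cross term bounded, after Young's inequality, by a constant times $\eta_t\sigma^2/\epsilon$ plus summable pieces.

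Step 3 handles momentum, either with the usual auxiliary sequence $Z_t=W_t+\frac{\beta_1}{1-\beta_1}(W_t-W_{t-1})$ or by expanding $\widehat M_t$ as a weighted sum of past gradients. We then write $G_\tau=\nabla f(W_{\tau-1})+\xi_\tau$. Unbiasedness kills the first-order noise term in expectation given the predictable preconditioner. The lag $\nabla f(W_{\tau-1})-\nabla f(W_{t-1})$ is bounded by $L\sum\eta\|D\widehat M\|$, and this contributes $O(\eta_t^2)$ terms.

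Step 4 is the main inequality. The lower bound on $D_{t-1}$ gives a descent term $c\,\eta_t\|\nabla f(W_{t-1})\|^2$ with $c=1/(C'+\epsilon)$, where $C'$ collects $C$ plus the noise. The terms $\E\|\widehat M_t\|^2\le 2(C^2+\sigma^2)$ appear with $\eta_t^2$, and the noise mismatch terms appear with $\eta_t\sigma^2(1+C/\epsilon)$. Summing over $t\le T$ and using $f(W_0)-f^*$, we divide by $\sum_t\eta_t\ge\eta_0\sqrt T$ and use $\sum_t\eta_t^2\le\eta_0^2(1+\log T)$. The minimum is then bounded by $K\frac{1+\log T}{\sqrt T}$ plus a non-vanishing term proportional to $\sigma^2$. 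Careful constant tracking in the Young splits should give the coefficient $3+4C/\epsilon$.

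We expect the main obstacle to be Step 2. With almost-sure bounds only on the true gradient, not the stochastic one, both the lower bound on $D_t$ and the decoupling of $D_t$ from $\xi_t$ are delicate. We would first try to work entirely with the predictable $D_{t-1}$ and bound $\|(D_t-D_{t-1})\odot\widehat M_t\|$ using the crude bound $|D_t-D_{t-1}|\le 1/\epsilon$ on the noise part. This crude bound is what yields the $\frac{C}{\epsilon}\sigma^2$ term rather than a vanishing one. The price is the non-vanishing $\sigma^2$ floor, which the statement indeed allows.
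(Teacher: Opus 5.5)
\textbf{There is a genuine gap, in your Step 4.} Your lower bound $D_t\ge 1/(\sqrt{\max_{\tau\le t}\|G_\tau\|_2^2}+\epsilon)$ is random. Under the assumptions only the \emph{true} gradient is almost surely bounded; the noise is bounded only in mean square. So no deterministic constant $C'$ ``collecting $C$ plus the noise'' exists. Without it you cannot pass from $\E[\nabla f^\top D_t\nabla f]$ to $c\,\E\|\nabla f\|_2^2$, and every Young split you plan to absorb into the descent term relies on that $c$. You flag this as the main obstacle, but the proposal does not resolve it.

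\textbf{The missing idea} is how the paper turns the random preconditioner into an additive penalty; its $H_t$ is your $\operatorname{diag}(D_t)$.
\begin{itemize}
\item Split $G_j=\nabla f(W_j)+\xi_j$ inside the pooled average. A group average is at most the squared norm of the full vector; this is the only place pooling enters, which is why $s$ disappears. This gives $\widehat V_{t,i}\le 2C^2+2Z_t$, where $Z_t=\frac{1-\beta_2}{1-\beta_2^t}\sum_{j\le t}\beta_2^{t-j}\|\xi_j\|_2^2$.
\item Then, almost surely, $\nabla f(W_t)^\top H_t\nabla f(W_t)\ge(\|\nabla f(W_t)\|_2^2-Z_t)/(2C+\epsilon)$. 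On $\{Z_t\le C^2\}$ every denominator is at most $2C+\epsilon$. On $\{Z_t>C^2\}$ the right-hand side is negative, because $\|\nabla f(W_t)\|_2\le C$.
\item Since $\E Z_t\le\sigma^2$, the random preconditioner costs only an additive $\sigma^2$. This is the ``$1$'' in $3+4C/\epsilon$.
\end{itemize}

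\textbf{Steps 2 and 3 are unjustified as written, and also unnecessary.}
\begin{itemize}
\item The claim that the pooled EMA changes by $(1-\beta_2)$ times a bounded quantity fails, because $\overline S_t$ is not almost surely bounded.
\item The crude bound $|D_t-D_{t-1}|\le1/\epsilon$ leaves a cross term of size $\eta_t\|\nabla f\|_2\|n_t\|_2/\epsilon$, with $n_t$ the noise part of $\widehat M_t$. Either it must be absorbed by the missing deterministic descent coefficient, or, bounded via $\|\nabla f\|_2\le C$, it gives a floor of order $C\sigma/\epsilon$ rather than $\sigma^2$.
\item Unbiasedness cannot kill the past noises $\xi_\tau$, $\tau<t$, inside $\widehat M_t$, since they are correlated with $D_{t-1}$ and the current iterate.
\item The stated assumption conditions only on the current iterate, not the history. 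So it does not even cancel $\langle\nabla f,D_{t-1}\xi_t\rangle$.
\end{itemize}

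\textbf{The paper never decouples anything.} It writes $\widehat M_t=\nabla f(W_t)+e_t$ and applies Young's inequality in the $H_t$-inner product: $\langle\nabla f,H_t\widehat M_t\rangle\ge\frac12\nabla f^\top H_t\nabla f-\frac12 e_t^\top H_te_t$. It then bounds $e_t^\top H_te_t\le\|e_t\|_2^2/\epsilon$ and shows $\E\|e_t\|_2^2\le 2C_{\mathrm m}^2/t+2\sigma^2$. The drift part uses smoothness and $\E\|H_r\widehat M_r\|_2^2\le(C^2+\sigma^2)/\epsilon^2$. The noise part uses Jensen on the convex weights, with no martingale argument. Multiplying the one-step inequality by $2(2C+\epsilon)$ gives exactly the floor $\bigl(1+2(2C+\epsilon)/\epsilon\bigr)\sigma^2=(3+4C/\epsilon)\sigma^2$. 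The remaining terms sum to $K(1+\log T)/\sqrt T$.
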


 The details of assumptions and proofs are provided in
Appendix~\ref{app:adampm-proofs}. 
This bound, including its noise residual, is independent of group size $s$. Consequently, AdamPM and Adam ($s=1$) share the same stationarity upper bound in our analysis.

\begin{corollary}[Vanishing stationarity bound]
\label{cor:adampm-relative}
Under Theorem~\ref{thm:adampm-main}, suppose additionally that
\begin{equation}
\begin{gathered}
\E[\|\xi_t\|_2^2\mid W_t]
\le\kappa\|\nabla f(W_t)\|_2^2, \quad 0\le\kappa<\frac{\epsilon}{16C+12\epsilon},
\end{gathered}
\label{eq:adampm-relative-condition}
\end{equation}
where $\xi_t=G_t-\nabla f(W_t)$.
Then
\begin{equation}
\begin{aligned}
&\min_{1\le t\le T}\E\|\nabla f(W_t)\|_2^2 \le \frac{K(1+\log T)}
{\left[1-4\kappa(3+4C/\epsilon)\right]\sqrt T}.
\end{aligned}
\label{eq:adampm-relative-rate}
\end{equation}
\end{corollary}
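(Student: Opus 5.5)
The idea is to reopen the proof of Theorem~\ref{thm:adampm-main} instead of using its final bound as a black box. The residual $(3+4C/\epsilon)\sigma^2$ there comes from places where the conditional noise variance $\E[\|\xi_t\|_2^2\mid W_t]$ was replaced by $\sigma^2$. Under the relative condition \eqref{eq:adampm-relative-condition}, each such occurrence can instead be bounded by $\kappa\,\E\|\nabla f(W_t)\|_2^2$, and then absorbed into the left-hand side.

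I would first isolate the per-step inequality behind Theorem~\ref{thm:adampm-main}, before it is summed and divided. I expect it to have the form
\begin{equation*}
\sum_{t=1}^T \eta_t\,\E\|\nabla f(W_t)\|_2^2 \le K'(1+\log T) + 4\left(3+\frac{4C}{\epsilon}\right)\sum_{t=1}^T \eta_t\,\E\|\xi_t\|_2^2 .
\end{equation*}
This inequality would come from the descent lemma under $L$-smoothness. In that step the preconditioner entries $1/(\sqrt{\widehat V_t}+\epsilon)$ are bounded above by $1/\epsilon$ and below by something like $1/(C'+\epsilon)$, uniformly in $s$, because group means of squared gradients inherit the same bounds. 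The momentum cross terms contribute the $C/\epsilon$ factor, and the summed $\eta_t^2$ terms contribute the $\log T$ factor. The factor $4$ in front should match the way the min over $t$ is extracted using $\sum_t\eta_t\ge \eta_0\sqrt T$, and the statement of the corollary tells me what the constant should be.

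The key step is the substitution. By the tower property, $\E\|\xi_t\|_2^2=\E\bigl[\E[\|\xi_t\|_2^2\mid W_t]\bigr]\le\kappa\,\E\|\nabla f(W_t)\|_2^2$. Inserting this bound, the noise sum becomes $4\kappa(3+4C/\epsilon)\sum_t\eta_t\E\|\nabla f(W_t)\|_2^2$, and I move it to the left. The condition $\kappa<\epsilon/(16C+12\epsilon)$ is exactly $4\kappa(3+4C/\epsilon)<1$, so the remaining coefficient $1-4\kappa(3+4C/\epsilon)$ is positive. Dividing by that coefficient and by $\sum_t\eta_t\ge\eta_0\sqrt T$, and lower bounding the weighted average by the minimum, gives \eqref{eq:adampm-relative-rate} with the same $K$.

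The main obstacle is ensuring that every occurrence of $\sigma^2$ in the proof of the theorem is evaluated at the same $t$, and with weight $\eta_t$, as the gradient term it is to be absorbed into. Noise coming through the momentum $M_t$ mixes past steps with weights $\beta_1^{t-k}$. I would handle this by bounding $\sum_t\eta_t\sum_{k\le t}\beta_1^{t-k}\E\|\xi_k\|^2$. Since $\eta_t$ is nonincreasing, this is at most $\frac{1}{1-\beta_1}\sum_k\eta_k\E\|\xi_k\|^2$, and the constant $1/(1-\beta_1)$ is presumably already folded into the stated factor $3+4C/\epsilon$. If any noise term instead appears only with weight $\eta_t^2$, it is smaller, and it could be bounded crudely using $\|\nabla f\|\le C$ and moved into $K$.
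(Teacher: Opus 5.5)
Your outer structure matches the paper's: reopen the summed descent inequality, use the tower property to get $\E\|\xi_t\|_2^2\le\kappa\E\|\nabla f(W_t)\|_2^2$, absorb into $S_T=\sum_t\eta_t\E\|\nabla f(W_t)\|_2^2$, then divide by $1-4\kappa(3+4C/\epsilon)$ and by $\sum_t\eta_t\ge\eta_0\sqrt T$. Your guessed intermediate inequality $S_T\le K'(1+\log T)+4(3+4C/\epsilon)\sum_t\eta_t\E\|\xi_t\|_2^2$ is also true. The gap is that nothing in your argument produces the factor $4$, and the mechanism you do propose gives a different, $\beta$-dependent constant.

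The $4$ does not come from extracting the minimum. Dividing by $\sum_t\eta_t$ rescales both sides and puts no constant in front of the noise sum. In the theorem's proof, noise enters only through bias-corrected histories with weights $w^{(\beta)}_{t,j}=(1-\beta)\beta^{t-j}/(1-\beta^t)$. The first is the momentum error, via $\E\|n_t\|_2^2\le\sum_j w^{(\beta_1)}_{t,j}\E\|\xi_j\|_2^2$. The second is the noise energy $Z_t=\sum_j w^{(\beta_2)}_{t,j}\|\xi_j\|_2^2$. $Z_t$ is needed because $H_t$ has no almost-sure lower bound: only $\nabla f(W_t)$ is bounded, not $G_t$, so your claim that group means of squared gradients inherit a uniform bound fails. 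The paper uses $\nabla f(W_t)^\top H_t\nabla f(W_t)\ge(\|\nabla f(W_t)\|_2^2-Z_t)/(2C+\epsilon)$ instead, and your sketch does not account for this second history.

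Your device $\sum_t\eta_t\sum_{k\le t}\beta_1^{t-k}\E\|\xi_k\|_2^2\le\frac{1}{1-\beta_1}\sum_k\eta_k\E\|\xi_k\|_2^2$, applied to both histories, gives an absorbed coefficient $\kappa\bigl[\frac{1}{1-\beta_2}+\frac{2(2C+\epsilon)}{\epsilon(1-\beta_1)}\bigr]$. At $\beta_1=0.9$, $\beta_2=0.95$ this is $\kappa(40+40C/\epsilon)$, against $\kappa(12+16C/\epsilon)$ in the statement. It cannot be ``folded into'' $3+4C/\epsilon$, which contains no $\beta$. You would only prove a weaker corollary under a stricter, $\beta$-dependent restriction on $\kappa$.

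The missing ingredient is a weighted-history bound uniform in $\beta$: for $\eta_t=\eta_0/\sqrt t$ and $u_j\ge0$, $\sum_{t=1}^T\eta_t\sum_{j\le t}w^{(\beta)}_{t,j}u_j\le4\sum_{j=1}^T\eta_ju_j$. To prove it, keep the $(1-\beta)$ normalization and split $w^{(\beta)}_{t,j}=(1-\beta)\beta^{t-j}+\beta^{t-j}\frac{(1-\beta)\beta^t}{1-\beta^t}$, using $\frac{(1-\beta)\beta^t}{1-\beta^t}=\beta^t/\sum_{r<t}\beta^r\le1/t$. After exchanging sums, the first part contributes at most $\eta_j$. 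The bias-correction part contributes at most $\eta_0\sum_{t\ge j}t^{-3/2}\le3\eta_j$, which is where the $1/\sqrt t$ schedule is essential. With $u_j=\E\|\xi_j\|_2^2$ this gives $\sum_t\eta_t\E Z_t\le4\kappa S_T$ and $\sum_t\eta_t\E\|n_t\|_2^2\le4\kappa S_T$. Combined with $\E\|e_t\|_2^2\le2\E\|a_t\|_2^2+2\E\|n_t\|_2^2$ and the factor $(2C+\epsilon)/\epsilon$ on the error term, the total is $4\kappa+8\kappa(2C+\epsilon)/\epsilon=4\kappa(3+4C/\epsilon)$. That is exactly the quantity the threshold $\kappa<\epsilon/(16C+12\epsilon)$ keeps below $1$.
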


Thus, under the additional relative-noise condition, AdamPM achieves the same $O(\log T/{\sqrt{T})}$ convergence rate as Adam.
Proofs and explicit constants are provided in Appendix~\ref{app:adampm-proofs}.

\setlength{\intextsep}{12pt plus 2pt minus 2pt}

\begin{table}[!t]
\centering
\caption{Comparison of AdamPM, Adam, and representative memory-efficient optimizers for pre-training LLaMA models of various sizes on the C4 dataset. Validation perplexity is reported, with the estimated BF16 memory cost of model parameters and optimizer states. * denotes that the results are taken directly from the original papers, as they use the same pre-training setup, model scales, and token budgets.}
\label{tablePretraining}
\begin{tabular}{lcccc}
\toprule
Model Size & \textbf{60M} & \textbf{130M} & \textbf{350M} & \textbf{1.3B} \\
\midrule
Training Tokens & 1.1B & 2.2B & 6.4B & 13.1B\\
\midrule
Adam                  & 30.30 (0.325G)& 23.23 (0.749G)& 17.79 (2.05G) & 14.83 (7.48G)\\
\midrule
\textbf{AdamPM-2}   & 30.31 (0.270G)& 23.47 (0.624G)& 17.69 (1.71G) & 14.70 (6.24G)\\
\textbf{AdamPM-4}   & 30.25 (0.243G)& 22.99 (0.562G)& 17.64 (1.54G) & 14.66   (5.61G)\\
\textbf{AdamPM-8}   & 30.61 (0.230G)& 22.87 (0.531G)& 17.75 (1.46G) & 14.73 (5.30G)\\
\textbf{AdamPM-16}  & 30.57 (0.223G)& 23.17 (0.515G)& 17.86 (1.41G) & 14.93 (5.14G)\\
\textbf{AdamPM-inf} & 30.55 (0.217G)& 23.36 (0.500G)& 18.27 (1.37G) & 15.54 (4.99G)\\
\midrule
Adam-mini             & 30.82 (0.217G)& 23.57 (0.500G)& 18.59 (1.37G) & 15.72 (4.99G)\\
GaLore*                & 34.88 (0.24G) & 25.36 (0.52G) & 18.95 (1.22G) & 15.64 (4.38G) \\
APOLLO*                & 31.55 (0.24G) & 22.94 (0.52G) & 16.85 (1.22G) & 14.20 (4.38G) \\
SUMO*                  & 34.26 (0.23G) & 24.87 (0.51G) & 18.69 (1.16G) & 14.68 (3.84G) \\
LORO*                  & 33.96 (0.24G) & 24.59 (0.57G) & 18.84 (1.11G) & 15.19 (3.66G) \\
\bottomrule
\end{tabular}
\end{table}


\section{Experiments}
\label{Experiments}

We conduct comprehensive experiments to evaluate AdamPM across diverse training scenarios. Our evaluation includes LLaMA~\citep{a:73} pre-training, language-model fine-tuning on the GLUE benchmark~\citep{a:GLUE}, and multimodal fine-tuning using TinyLLaVA~\citep{zhou2024tinyllava}, a compact vision-language model. 
These experiments span different model architectures, parameter scales, and learning tasks, providing a systematic assessment of AdamPM in both pre-training and downstream adaptation. 
Throughout these experiments, AdamPM-$s$ denotes AdamPM with a group size of $s$.

\subsection{Pre-Training of LLaMA}

\paragraph{Experimental setup.}
We pre-train LLaMA models with 60M, 130M, 350M, and 1.3B parameters on the English subset of C4~\citep{a:74}. 
Results are summarized in Table~\ref{tablePretraining}. 
We evaluate performance using validation perplexity (PPL), where lower is better. 
We also report the combined memory footprint of model parameters and optimizer states, both stored in BF16.

For all model scales, we set $\beta_1=0.9$, $\beta_2=0.95$, and $\epsilon=10^{-8}$. 
For the 60M, 130M, 350M, and 1.3B models, both AdamPM and Adam use learning rates of $4\times10^{-3}$, $2\times10^{-3}$, $6\times10^{-4}$, and $3\times10^{-4}$, respectively. 
These values are selected for their robustness based on learning rate sweeps for Adam, such as those shown in Figure~\ref{figLearningRateRange}.
Detailed training configurations are provided in Appendix~\ref{app:LLaMA-Pre-Traininng}.

\paragraph{Baselines.}
We compare AdamPM with vanilla Adam, Adam-mini~\citep{a:26}, and several other representative memory-efficient optimizers, including GaLore~\citep{a:8}, which compresses optimizer states by projecting gradients into low-rank subspaces through SVD; APOLLO~\citep{a:10}, which uses random projection to construct subspaces; SUMO~\citep{a:58}, which orthogonalizes the projected first moment; and LORO~\citep{a:46}, which pre-trains low-rank-parameterized weight matrices through Riemannian optimization.
We evaluate Adam-mini in our own experiments, using the same learning rates, moment decay rates, and other hyperparameters as Adam and AdamPM.

We carefully tune the learning rate of Adam for each model scale. As shown in Table~\ref{tablePretraining}, the tuned Adam baseline achieves substantially stronger performance than the Adam baselines reported in several previous studies, such as SUMO~\citep{a:58} and LORO~\citep{a:46}. This observation highlights the importance of adequately tuning Adam when evaluating memory-efficient optimizers.

\paragraph{Main results.}
As shown in Table~\ref{tablePretraining}, AdamPM achieves perplexity comparable to that of carefully tuned vanilla Adam and frequently yields lower perplexity when the group size is not too large, while substantially reducing optimizer-state memory.
In particular, AdamPM-8 and AdamPM-16 reduce second moment memory by 87.5\% and 93.75\%, respectively, relative to Adam.
We further report the actual memory usage per GPU and compare the training time of AdamPM and Adam during LLaMA-1.3B pre-training in Appendix~\ref{app:LLaMA-Pre-Traininng}.
These results demonstrate that Adam's second moment statistics can be pooled to reduce memory usage without noticeably degrading pre-training performance.
We additionally evaluate an extreme pooling strategy, denoted as AdamPM-inf, in which all second moment entries within each row share a single scalar. As shown in Table~\ref{tablePretraining}, this aggressive pooling leads to modest performance degradation on the 350M and 1.3B models, while Adam-mini, which uses more aggressive second moment sharing, performs even worse on these two models.
Overall, group sizes between 2 and 8 generally yield robust performance.

\subsection{Analysis of Pre-Training Experiments}
\label{AnalysisofPreTrainingExperiments}

\begin{figure}[ht]
    \centering
    \includegraphics[width=0.8\linewidth]{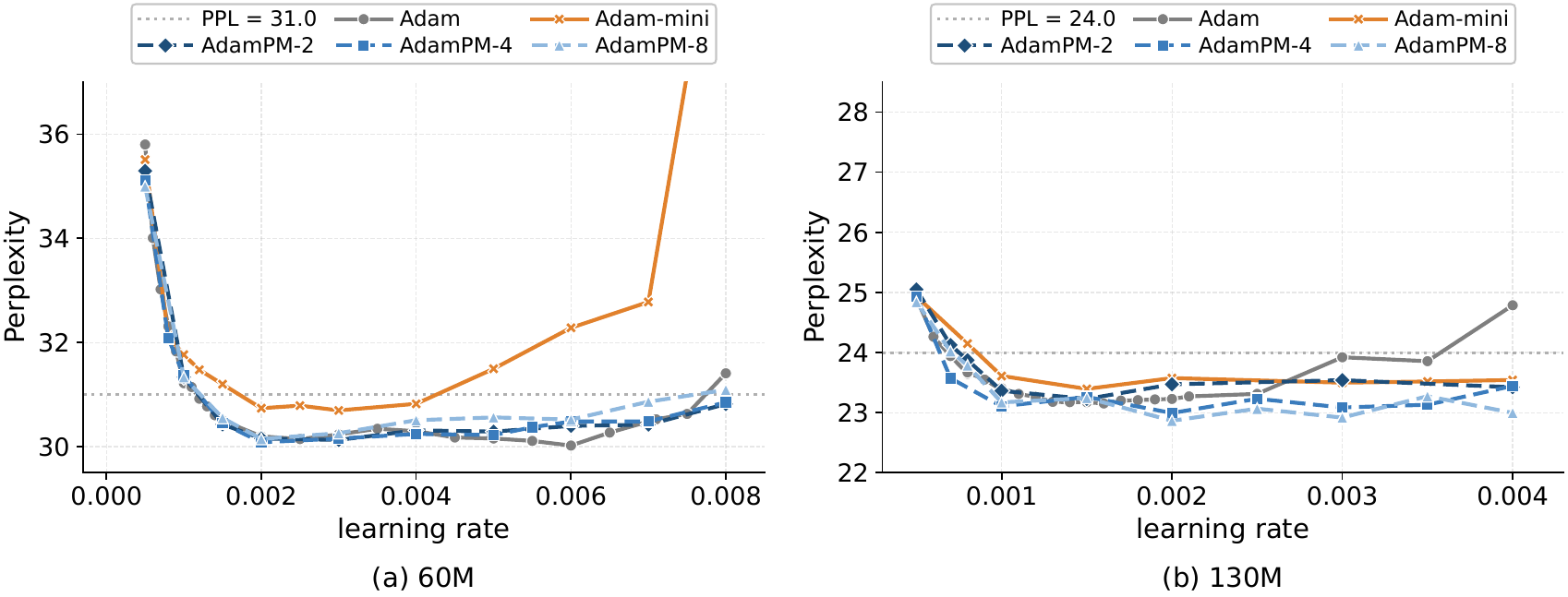}  
\par
\setlength{\abovecaptionskip}{1pt}  
    \caption{Learning rate sweeps for LLaMA 60M and 130M pre-training on C4}
    \label{figLearningRateRange}

\vspace{-8pt}

\end{figure}

\paragraph{Hyperparameter transferability.}
Figure~\ref{figLearningRateRange} presents learning rate sweeps for LLaMA 60M and 130M pre-training, with $\beta_1=0.9$ and $\beta_2=0.95$ fixed.
We evaluate AdamPM with group sizes of 2, 4, and 8, and observe that it maintains competitive perplexity over the learning rate ranges where Adam performs well.
These results demonstrate that AdamPM can inherit the well-tuned hyperparameters of Adam, making it easy to integrate into existing Adam optimizer implementations.
Furthermore, the LLaMA 60M results in Figure~\ref{figLearningRateRange} (a) show that AdamPM inherits Adam's learning rates more effectively than Adam-mini across all tested group sizes.


\setlength{\intextsep}{6pt} 

\begin{wraptable}{r}{0.45\textwidth}
\centering
\setlength{\tabcolsep}{2pt}
\setlength{\abovecaptionskip}{0pt}
\caption{Ablation study of pooling strategy.}
\label{tablePretraining2}

\begin{tabularx}{\linewidth}{@{}l*{3}{>{\centering\arraybackslash}X}@{}}
\toprule
Model Size & \textbf{60M} & \textbf{130M} & \textbf{350M} \\
\midrule
Adam                              & 30.30 & 23.23  & 17.79  \\
\midrule
\textbf{AdamPM-3}                 & 30.58 & 23.36  & 17.63  \\
\textbf{AdamPM-5}                 & 30.54 & 23.09  & 17.59  \\
\textbf{AdamPM-col-2}             & 30.45 & 23.40  & 17.56  \\
\textbf{AdamPM-col-3}             & 30.36 & 23.34  & 17.53  \\
\textbf{AdamPM-M-2}               & 59.99 & 206.35 & 49.26 \\
\textbf{AdamPM-M$^{\dagger}$-2}   & 42.35 & 30.12  & 23.71  \\
\bottomrule
\end{tabularx}
\end{wraptable}

\paragraph{Flexibility of the pooling strategy.}
Table~\ref{tablePretraining2} provides additional analyses of the group size and pooling direction. 
AdamPM-3 and AdamPM-5 achieve performance comparable to or even better than Adam. 
This result highlights AdamPM's flexibility: padding the final incomplete group with the mean of its existing entries allows any positive integer group size, rather than only powers or multiples of two. 
We also evaluate column-wise second moment pooling, denoted as AdamPM-col. 
This variant also achieves performance comparable to that of Adam, demonstrating that pooling is effective in either direction.

\paragraph{First- versus second-order moment pooling.}
We further evaluate the first moment pooling variant, denoted as AdamPM-M. 
Notably, directly pooling first moment entries leads to substantial performance degradation. 
As shown in Table~\ref{tablePretraining2}, AdamPM-M-2, which pools every two adjacent entries within each row of the first moment matrices, performs considerably worse than vanilla Adam.

Following the layer-wise strategy adopted by GaLore~\citep{a:8} and FRUGAL~\citep{a:61}, we revert the \texttt{embed}, \texttt{lm\_head} and \texttt{normalization} layers to vanilla Adam while applying first moment pooling to the remaining parameters; we denote this variant as AdamPM-M$^{\dagger}$.
The results in Table~\ref{tablePretraining2} show that although this modification improves performance over pooling the first moments of all weight parameters, a substantial performance gap relative to vanilla Adam remains.

Combined with the higher effective ranks and larger coefficients of variation observed for the first moment matrices in Section~\ref{LocalSimilarity}, 
these results indicate that the first moment carries substantial information regarding parameter update directions. 
Consequently, the first and second moments should not be considered equally compressible: 
the second moment exhibits considerable redundancy and can be effectively pooled, whereas directly pooling the first moment discards important optimization information and substantially degrades performance.

\paragraph{Training dynamics.}

Figure~\ref{figPerplexityComparision} presents the validation-perplexity curves of AdamPM-4, AdamPM-8, and Adam during the pre-training of the 1.3B LLaMA model. 

\setlength{\intextsep}{0pt} 

\begin{wrapfigure}{r}{0.4\textwidth}
    \centering
    \includegraphics[width=\linewidth]{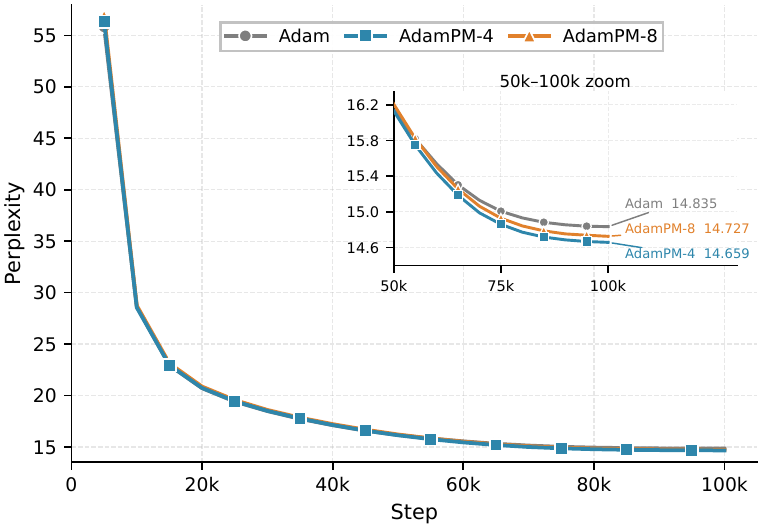}
    \setlength{\abovecaptionskip}{0pt}  
    \caption{Perplexity comparison between AdamPM and vanilla Adam during LLaMA 1.3B pre-training.}
    \label{figPerplexityComparision}
\end{wrapfigure}
The curves of the three optimizers nearly overlap throughout training,
indicating that moment pooling has little effect on the convergence trajectory of Adam.
AdamPM therefore preserves both the final perplexity and the overall optimization dynamics of vanilla Adam.

Figure~\ref{figExperimentAnalysis} reports the cosine similarity and relative error of the preconditioners and preconditioned updates before and after second moment pooling.
Figures~\ref{figExperimentAnalysis} (a) and (b) show that, throughout training, the cosine similarities remain consistently high, exceeding 0.993 for most steps, while the relative errors remain low. This indicates that AdamPM closely preserves both the directions and magnitudes of the updates produced by vanilla Adam, explaining their nearly identical convergence curves.

Figure~\ref{figExperimentAnalysis} (c) compares the cosine similarities of the preconditioned updates for group sizes $s\in\{2,4,8,16\}$. The similarity gradually decreases as $s$ increases, explaining the slight performance degradation observed with larger group sizes. Figure~\ref{figExperimentAnalysis} (d) shows that the cosine similarities increase with model size, suggesting that AdamPM is particularly suitable for larger models. Moreover, Figures~\ref{figExperimentAnalysis} (a), (b), and (d) consistently show that the preconditioned updates have higher cosine similarities and lower relative errors than the preconditioners themselves.

\setlength{\intextsep}{12pt plus 2pt minus 2pt}

\begin{figure}[ht]
    \centering
    \includegraphics[width=1.0\linewidth]{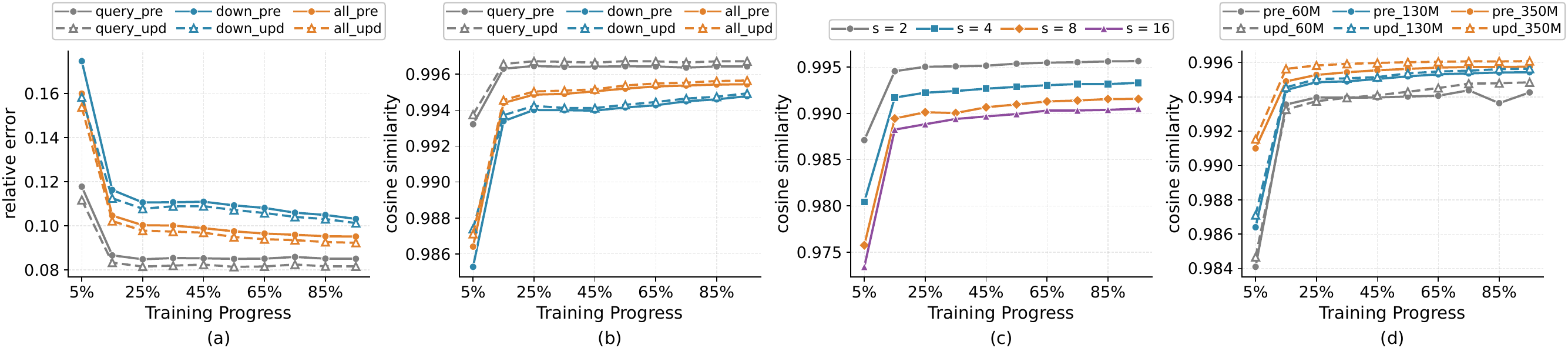}  
\par
\setlength{\abovecaptionskip}{4pt}  
    \caption{Comparison of the preconditioners $\frac{1}{\sqrt{\widehat{V}_t}+\epsilon}$ (denoted by \textit{pre}) and preconditioned updates $\frac{\widehat{M}_t}{\sqrt{\widehat{V}_t}+\epsilon}$ (denoted by \textit{upd}) before and after applying second moment pooling during LLaMA 130M pre-training. (a) Relative errors of the preconditioners and preconditioned updates for \texttt{query} and the MLP down projection matrix \texttt{down}, together with the averages across all weight matrices in the network. (b) Corresponding cosine similarities. (c) Average cosine similarity of preconditioned updates across all layers under different group sizes $s$. (d) Average cosine similarity across all layers for different model sizes.}
    \label{figExperimentAnalysis}

\vspace{-8pt}
    
\end{figure}

\subsection{Fine-Tuning on GLUE benchmark}

\begin{table}[ht]

\caption{Evaluating AdamPM for memory-efficient fine-tuning on GLUE benchmark using pretrained RoBERTa-base. We report the overall (matched and mismatched) accuracy for MNLI, Matthew's correlation coefficient for CoLA, Pearson correlation for STS-B, F1-score for MRPC, and accuracy for the remaining tasks.}
\label{tableFineTuning}
\centering
\setlength{\tabcolsep}{2pt}
\begin{tabular*}{\linewidth}{@{\extracolsep{\fill}}l|c|*{8}{c}|c@{}}
    \toprule
    & \textbf{Memory} & \textbf{CoLA} & \textbf{STS-B} & \textbf{MRPC} & \textbf{RTE} & \textbf{SST2} & \textbf{MNLI} & \textbf{QNLI} & \textbf{QQP} & \textbf{Avg} \\ 
    \midrule
    Adam                & 748M & 64.09 & 91.18 & 93.21 & 79.78 & 94.49 & 87.30 & 92.87 & \textbf{92.06} & 86.87 \\
    Adam-mini & 499M & 54.68 & 90.46 & 86.32 & 52.70 & 94.49 & 86.01 & 92.34 & 91.52 & 81.06 \\
    \midrule
    \textbf{AdamPM-2}   & 623M & \textbf{64.29} & 91.16 & 92.28 & 80.14 & 94.83 & 87.28 & 93.02 & 91.97 & 86.87 \\
    \textbf{AdamPM-4}   & 561M & 63.26 & 91.16 & \textbf{93.52} & 79.06 & 94.83 & 87.32 & \textbf{93.15} & 91.99 & 86.79 \\
    \textbf{AdamPM-8}   & 529M & 63.87 & \textbf{91.20} & 93.28 & \textbf{82.31} & 95.06 & \textbf{87.53} & 92.87 & 91.86 & \textbf{87.25} \\
    \textbf{AdamPM-16}   & 514M & 62.92 & \textbf{91.20} & 93.05 & 79.78 & \textbf{95.29} & 87.36 & 92.75 & 91.87 & 86.78 \\
    \bottomrule
\end{tabular*}
\end{table}

\paragraph{Experimental setup.}
We evaluate AdamPM by fine-tuning a pretrained RoBERTa-base~\citep{a:RoBERTa} model on the GLUE benchmark~\citep{a:GLUE}, using full fine-tuning with Adam and Adam-mini as baselines. 
For each task, we select Adam’s learning rate by grid search and reuse it for AdamPM and Adam-mini without further search. 
We vary the group size $s\in\{2,4,8,16\}$. 
All methods adopt the same hyperparameter strategy to ensure fair comparisons. 
Detailed experimental settings are provided in Appendix~\ref{app:hyperparameters of fine-tuning}.

\paragraph{Result analysis.}
As shown in Table~\ref{tableFineTuning}, AdamPM maintains average scores close to full fine-tuning with Adam across all evaluated group sizes, while memory usage decreases as group size $s$ increases. 
At $s=8$, its average score even exceeds the baseline, indicating that more extensive second-moment sharing can offer a favorable memory–performance trade-off. 
Although Adam-mini uses less memory, it performs worse when directly reusing Adam’s task-specific learning rates, particularly on smaller-data tasks such as CoLA and RTE. These results suggest that AdamPM can reuse Adam’s task-specific learning rates while maintaining competitive performance.

\subsection{Experiments on Large Multimodal Models}

\begin{table}[ht]
\vspace{-2pt}
\caption{Performance comparison of AdamPM with different group sizes on TinyLLaVA-1B.}
\label{tab:adampm_v_mllm}
\centering
\setlength{\tabcolsep}{2pt}
\begin{tabular*}{\linewidth}{@{\extracolsep{\fill}}l|c|*{8}{c}@{}}
\toprule
   & \textbf{Memory} & \textbf{VQAv2} & \textbf{MM-Vet} & \textbf{GQA} & \textbf{SQA} & \textbf{TextVQA} & \textbf{POPE} & \textbf{MME} & \textbf{MMMU} \\
\midrule
Adam  & 4.38G & 68.95 & 20.6 & 51.38 & 54.02 & \textbf{42.88} & 85.22 & 1330.60 & 29.80 \\
\midrule
\textbf{AdamPM-V-2} & 3.91G & 69.27 & \textbf{21.6} & 51.66 & 53.88 & 42.34 & 85.21 & 1329.66 & 30.30 \\
\textbf{AdamPM-V-4} & 3.68G & 69.55 & 19.5 & 52.06 & \textbf{54.14} & 42.41 & 85.58 & \textbf{1336.84} & 31.10 \\
\textbf{AdamPM-V-8} & 3.57G & 69.70 & 19.6 & 52.23 & 53.95 & 42.75 & 85.61 & 1320.59 & \textbf{31.30} \\
\textbf{AdamPM-V-16} & 3.51G & \textbf{69.84} & 19.2 & \textbf{52.61} & 54.07 & 42.41 & \textbf{85.82} & 1313.27 & 31.10 \\


\bottomrule
\end{tabular*}
\end{table}

\paragraph{Experimental setup.}
We evaluate AdamPM on TinyLLaVA-1B~\citep{zhou2024tinyllava}, comprising a SigLIP vision encoder, an MLP2x-GELU projector, and Qwen2-0.5B. We perform supervised fine-tuning on the vision-language instruction-tuning data released by LLaVA~\citep{liu2023visual}. During SFT, the language model and projector are fully fine-tuned, while the vision encoder remains frozen.
We compare Adam and AdamPM on VQAv2~\citep{goyal2017making}, MM-Vet~\citep{yu2023mm}, GQA~\citep{hudson2019gqa}, ScienceQA~\citep{lu2022learn}, TextVQA~\citep{singh2019towards}, POPE~\citep{li2023evaluating}, MME~\citep{fu2026mme}, and MMMU~\citep{yue2024mmmu} under identical training settings. Detailed training configurations are provided in Appendix~\ref{app:tinyllava_setup}.

\paragraph{Result analysis.}
As shown in Table~\ref{tab:adampm_v_mllm}, AdamPM substantially reduces static memory while maintaining comparable overall performance; AdamPM-16 saves 19.9\% memory over Adam and obtains the highest VQAv2, GQA, and POPE scores among the AdamPM variants.
Different group sizes perform best on different benchmarks, demonstrating that AdamPM provides a flexible trade-off between memory efficiency and multimodal performance.

\section{Conclusion}

We introduced AdamPM, a simple memory-efficient Adam variant that only pools second moment estimates. AdamPM reuses Adam's hyperparameters, remains model-agnostic, and incurs negligible training overhead. Our analysis shows that its preconditioners and parameter updates remain highly similar to Adam's, particularly at larger model scales. Experiments on LLM pre-training, fine-tuning, and multimodal models demonstrate substantial memory savings while preserving Adam-level performance and training efficiency.
These results establish second moment pooling used in AdamPM as a simple and practical approach to memory-efficient LLM training.

\paragraph{Limitations.}
Due to computational constraints, we have not conducted full training runs on models with 7B or more parameters. Our study considers pooling only along a single direction, either row-wise or column-wise. Pooling over two-dimensional groups and selective pooling that retains elementwise second moments in certain layers or modules, along with their effects on optimization, remain unexplored.

\clearpage

\subsection*{AI use statement}

In this work, we used generative AI tools solely for language polishing to improve the clarity and readability of the manuscript.
We have not used generative AI tools for any other research tasks.
We have reviewed all AI-assisted work.
We take responsibility for the final content of this work, including text, claims or artifacts produced with the aid of generative AI.

\subsection*{Reproducibility statement}

Our experiments use open-source model implementations and publicly available datasets for training and evaluation. To facilitate reproducibility, we provide the AdamPM implementation and experimental code at  \url{https://github.com/jixiaowei95/AdamPM}.

\bibliography{iclr2027_conference}
\bibliographystyle{iclr2027_conference}

\clearpage
\appendix

\renewcommand{\thefigure}{A\arabic{figure}}
\renewcommand{\thetable}{A\arabic{table}}
\setcounter{figure}{0}
\setcounter{table}{0}

\section{Lemmas and Proofs}
\label{app:adampm-proofs}

Our analysis draws on the momentum-error decomposition and descent-analysis framework used in FOAM \citep{a:FOAM}. We derive bounds tailored to AdamPM's grouped second moments, explicitly accounting for the random preconditioner and accumulated gradient noise.

\subsection{Setup and assumptions}
\label{app:adampm-setup}

We identify all parameters with $W\in\R^d$ and use the Euclidean norm;
matrix parameters are equivalently equipped with the Frobenius norm.
We state the assumptions used throughout the analysis.
\begin{enumerate}
\item \emph{L-Smoothness and lower boundedness.} The differentiable objective
$f:\R^d\to\R$ satisfies
\[
f(W)\ge f_*>-\infty,
\]
and
\[
\|\nabla f(W)-\nabla f(W')\|_2
\le L\|W-W'\|_2,
\quad W,W'\in\R^d.
\]
\item \emph{Conditional unbiasedness and bounded noise variance.}
For every $t$,
\[
G_t=\nabla f(W_t)+\xi_t,\quad
\E[\xi_t\mid W_t]=0,\quad
\E\|\xi_t\|_2^2\le\sigma^2.
\]
\item \emph{Bounded true gradients.} For every $t$,
$\|\nabla f(W_t)\|_2\le C$ almost surely.
\end{enumerate}

Let $P_s$ average and broadcast coordinates within each fixed row-wise
group of at most $s$ entries.
For any such group $I$,
\[
(P_s z)_i=\frac{1}{|I|}\sum_{j\in I}z_j,
\quad i\in I.
\]
A final partial group is averaged over its actual entries.
Padding that group with its own mean and discarding the padding gives
the same operator.
Ungrouped coordinates are represented by singleton groups.
Starting from deterministic $W_1$ and $M_0=V_0=0$, AdamPM performs
\begin{equation}
\begin{aligned}
M_t&=\beta_1M_{t-1}+(1-\beta_1)G_t,\\
V_t&=\beta_2V_{t-1}+(1-\beta_2)P_s(G_t^{\odot2}),\\
\widehat M_t&=\frac{M_t}{1-\beta_1^t},
\quad
\widehat V_t=\frac{V_t}{1-\beta_2^t},\\
W_{t+1}&=W_t-\eta_t
\frac{\widehat M_t}{\sqrt{\widehat V_t}+\epsilon}.
\end{aligned}
\label{eq:adampm-update}
\end{equation}
All vector operations are elementwise, and $\odot2$ denotes elementwise
squaring.
The parameters $\beta_1,\beta_2\in[0,1)$ and $\epsilon>0$ are fixed
independently of $T$.

Define
\begin{equation}
H_t=\operatorname{diag}\!\left(
\frac{1}{\sqrt{\widehat V_{t,i}}+\epsilon}
\right)_{i=1}^d.
\label{eq:adampm-H}
\end{equation}
Then $W_{t+1}=W_t-\eta_tH_t\widehat M_t$ and
$0\prec H_t\preceq\epsilon^{-1}I$.

\subsection{Moment and preconditioner bounds}

\begin{lemma}
\label{lem:adampm-moments}
For every $t\ge1$,
\begin{equation}
\begin{aligned}
\E\|G_t\|_2^2&\le C^2+\sigma^2, \quad
\E\|\widehat M_t\|_2^2\le C^2+\sigma^2,\quad
\E\|H_t\widehat M_t\|_2^2
\le\frac{C^2+\sigma^2}{\epsilon^2}.
\end{aligned}
\label{eq:adampm-moment-bounds}
\end{equation}
Define the accumulated noise energy
\begin{equation}
Z_t=\frac{1-\beta_2}{1-\beta_2^t}
\sum_{j=1}^t\beta_2^{t-j}\|\xi_j\|_2^2.
\label{eq:adampm-Z}
\end{equation}
Then $\E Z_t\le\sigma^2$, and, almost surely,
\begin{equation}
\nabla f(W_t)^\top H_t\nabla f(W_t)
\ge
\frac{\|\nabla f(W_t)\|_2^2-Z_t}{2C+\epsilon}.
\label{eq:adampm-energy-lower}
\end{equation}
\end{lemma}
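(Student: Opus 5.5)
The plan is to prove the four moment bounds by convexity and the orthogonality of the noise, and to prove \eqref{eq:adampm-energy-lower} by a deterministic, almost-sure bound on the largest diagonal entry of $H_t^{-1}$. Throughout, write $g_t=\nabla f(W_t)$ and let $\alpha_{t,j}=\frac{(1-\beta)\beta^{t-j}}{1-\beta^t}$ for $\beta\in\{\beta_1,\beta_2\}$. Then $\alpha_{t,j}\ge0$ and $\sum_{j=1}^t\alpha_{t,j}=1$.

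\textbf{Moment bounds.} Expanding $\|G_t\|_2^2=\|g_t\|_2^2+2g_t^\top\xi_t+\|\xi_t\|_2^2$, the cross term has zero expectation because $g_t$ is $W_t$-measurable and $\E[\xi_t\mid W_t]=0$. This gives $\E\|G_t\|_2^2\le C^2+\sigma^2$. Unrolling the recursion gives $\widehat M_t=\sum_j\alpha_{t,j}G_j$ with the $\beta_1$-weights. Jensen's inequality for $\|\cdot\|_2^2$ yields $\|\widehat M_t\|_2^2\le\sum_j\alpha_{t,j}\|G_j\|_2^2$, and taking expectations gives the second bound. The third follows from $H_t\preceq\epsilon^{-1}I$. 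Finally, $Z_t=\sum_j\alpha_{t,j}\|\xi_j\|_2^2$ with the $\beta_2$-weights, so $\E Z_t\le\sigma^2$ because the weights sum to one.

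\textbf{Energy lower bound.} Fix an outcome. If $Z_t\ge\|g_t\|_2^2$, the right-hand side of \eqref{eq:adampm-energy-lower} is nonpositive while the left-hand side is nonnegative, so there is nothing to prove. Otherwise $\sqrt{Z_t}<\|g_t\|_2\le C$. The key structural fact about $P_s$ is that it maps nonnegative vectors to nonnegative vectors and preserves the coordinate sum, since averaging within each group keeps the group total. Hence every coordinate satisfies
\[
\widehat V_{t,i}\le\sum_k\widehat V_{t,k}=\sum_j\alpha_{t,j}\|G_j\|_2^2\le\sum_j\alpha_{t,j}\bigl(C+\|\xi_j\|_2\bigr)^2 ,
\]
using the $\beta_2$-weights and $\|g_j\|_2\le C$ almost surely. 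The square root of this weighted sum is a weighted $\ell_2$ norm of the sequence $\bigl(C+\|\xi_j\|_2\bigr)_j$. Minkowski's inequality (with $\sum_j\alpha_{t,j}=1$) then gives
\[
\sqrt{\widehat V_{t,i}}\le C+\sqrt{Z_t}<2C .
\]
Therefore every diagonal entry of $H_t$ exceeds $1/(2C+\epsilon)$, and
\[
g_t^\top H_tg_t\ge\frac{\|g_t\|_2^2}{2C+\epsilon}\ge\frac{\|g_t\|_2^2-Z_t}{2C+\epsilon}.
\]

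\textbf{Main difficulty.} The delicate step is getting a denominator that is \emph{linear} in $C$ and independent of $s$. A cruder route, $\|G_j\|^2\le2\|g_j\|^2+2\|\xi_j\|^2$, would give a $\sqrt2$ loss. The Minkowski step combined with the case split on $Z_t$ versus $\|g_t\|_2^2$ avoids this loss, and sum-preservation of $P_s$ is what makes the bound independent of the group size.
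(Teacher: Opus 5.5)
Your proof is correct and follows essentially the paper's argument: the moment bounds are the same (orthogonality of the noise, Jensen over the bias-corrected weights, and $H_t\preceq\epsilon^{-1}I$). For the energy bound, both you and the paper bound every $\widehat V_{t,i}$ almost surely by full-vector norms, independently of $s$, and then split cases; the only difference is that you use sum-preservation of $P_s$ plus Minkowski to get $\sqrt{\widehat V_{t,i}}\le C+\sqrt{Z_t}$, whereas the paper uses $(a+b)^2\le 2a^2+2b^2$ and ``group average $\le$ full squared norm'' to get $\widehat V_{t,i}\le 2C^2+2Z_t$, splitting on $\{Z_t\le C^2\}$.

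Your closing remark is mistaken, though. That ``cruder'' route loses no factor $\sqrt2$: on $\{Z_t\le C^2\}$ it gives $\sqrt{2C^2+2Z_t}\le 2C$, the same denominator $2C+\epsilon$, because the two bounds coincide exactly at the threshold $Z_t=C^2$, which is all the case split needs.
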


\begin{proof}
Conditional unbiasedness gives
$\E\langle\nabla f(W_t),\xi_t\rangle=0$.
Consequently,
\[
\E\|G_t\|_2^2
=\E\|\nabla f(W_t)\|_2^2+\E\|\xi_t\|_2^2
\le C^2+\sigma^2.
\]
The bias-corrected first moment satisfies
\begin{equation}
\widehat M_t=\sum_{j=1}^t w_{t,j}G_j,
\quad
w_{t,j}=\frac{(1-\beta_1)\beta_1^{t-j}}{1-\beta_1^t}.
\label{eq:adampm-weights}
\end{equation}
Jensen's inequality and $H_t\preceq\epsilon^{-1}I$ prove
Eq.~\ref{eq:adampm-moment-bounds}.
The same convex-combination argument gives $\E Z_t\le\sigma^2$.

The inequality $(a+b)^2\le2a^2+2b^2$ and group averaging yield
\[
\begin{aligned}
\bigl(P_s(G_j^{\odot2})\bigr)_i
&\le
2\bigl(P_s(\nabla f(W_j)^{\odot2})\bigr)_i
+2\bigl(P_s(\xi_j^{\odot2})\bigr)_i\\
&\le2C^2+2\|\xi_j\|_2^2.
\end{aligned}
\]
Here, each group average is bounded by the squared norm of the full vector.
Applying the bias-corrected second-moment weights gives
\begin{equation}
\widehat V_{t,i}\le2C^2+2Z_t.
\label{eq:adampm-v-upper}
\end{equation}
On $\{Z_t\le C^2\}$, every denominator is at most $2C+\epsilon$,
which implies Eq.~\ref{eq:adampm-energy-lower}.
On $\{Z_t>C^2\}$, its right-hand side is nonpositive, whereas its
left-hand side is nonnegative.
\end{proof}

\subsection{Momentum error}

Decompose the first-moment error as
\begin{equation}
\begin{aligned}
e_t&=\widehat M_t-\nabla f(W_t)=a_t+n_t,\\
a_t&=\sum_{j=1}^t w_{t,j}
\bigl(\nabla f(W_j)-\nabla f(W_t)\bigr),\\
n_t&=\sum_{j=1}^t w_{t,j}\xi_j.
\end{aligned}
\label{eq:adampm-error}
\end{equation}

\begin{lemma}
\label{lem:adampm-drift}
For $\eta_t=\frac{\eta_0}{\sqrt t}$, define
\begin{equation}
C_{\mathrm m}
=\frac{L\eta_0\sqrt{C^2+\sigma^2}}{\epsilon}
\frac{\beta_1\sqrt{2-\beta_1}}{(1-\beta_1)^{3/2}}.
\label{eq:adampm-Cm}
\end{equation}
Then, for every $t\ge1$,
\begin{equation}
\begin{aligned}
\E\|a_t\|_2^2&\le\frac{C_{\mathrm m}^2}{t},\\
\E\|n_t\|_2^2
&\le\sum_{j=1}^t w_{t,j}\E\|\xi_j\|_2^2
\le\sigma^2.
\end{aligned}
\label{eq:adampm-error-components}
\end{equation}
In particular,
\begin{equation}
\E\|e_t\|_2^2\le\frac{2C_{\mathrm m}^2}{t}+2\sigma^2.
\label{eq:adampm-error-bound}
\end{equation}
\end{lemma}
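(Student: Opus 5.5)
The plan is to bound the two pieces $a_t$ and $n_t$ of Eq.~\ref{eq:adampm-error} separately and then combine them with $(x+y)^2\le 2x^2+2y^2$, which turns $\E\|a_t\|_2^2\le C_{\mathrm m}^2/t$ and $\E\|n_t\|_2^2\le\sigma^2$ directly into Eq.~\ref{eq:adampm-error-bound}. Throughout I will use the weights $w_{t,j}$ from Eq.~\ref{eq:adampm-weights}, which are nonnegative and sum to one.

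\textbf{Noise term.} Since the weights sum to one, Jensen's inequality applied to the convex combination $n_t=\sum_j w_{t,j}\xi_j$ gives $\|n_t\|_2^2\le\sum_j w_{t,j}\|\xi_j\|_2^2$ pointwise. Taking expectations and using $\E\|\xi_j\|_2^2\le\sigma^2$ together with $\sum_j w_{t,j}=1$ yields both inequalities in the second line of Eq.~\ref{eq:adampm-error-components}. No martingale structure is needed, because Jensen already gives the stated bound.

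\textbf{Drift term.} By $L$-smoothness, $\|\nabla f(W_j)-\nabla f(W_t)\|_2\le L\sum_{k=j}^{t-1}\|W_{k+1}-W_k\|_2=L\sum_{k=j}^{t-1}\eta_k u_k$, where $u_k=\|H_k\widehat M_k\|_2$. Lemma~\ref{lem:adampm-moments} gives $\E u_k^2\le (C^2+\sigma^2)/\epsilon^2$. I would first apply Jensen over the weights, $\|a_t\|_2^2\le\sum_j w_{t,j}\|\nabla f(W_j)-\nabla f(W_t)\|_2^2$. For each $j$, Cauchy--Schwarz on the inner sum of $t-j$ terms gives $(\sum_k\eta_k u_k)^2\le (t-j)\sum_k\eta_k^2u_k^2$. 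Taking expectations then gives
\[
\E\|a_t\|_2^2\le\frac{L^2(C^2+\sigma^2)}{\epsilon^2}\sum_{j=1}^{t} w_{t,j}\,(t-j)\sum_{k=j}^{t-1}\eta_k^2 .
\]

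\textbf{Main obstacle.} The remaining difficulty is to show that this double sum is at most $\eta_0^2\,\beta_1^2(2-\beta_1)/\bigl((1-\beta_1)^3 t\bigr)$, up to the constant's exact form. The decay $\eta_k^2=\eta_0^2/k$ only gives $1/k\le 1/j$, which is too weak for small $j$. However, the weights $\beta_1^{t-j}$ make the small-$j$ terms exponentially negligible. Set $r=t-j$. I would use $1/k\le 1/(t-r)$ together with $\sum_{k=j}^{t-1}1/k\le r/(t-r)$ for $r<t$. I would then split at $r\le t/2$, where $1/(t-r)\le 2/t$, versus $r>t/2$, where $\beta_1^{r}$ is tiny.

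An alternative with cleaner constants is to pull $\sqrt{\eta_k}$-type factors differently, bounding $\|\sum_k\eta_k u_k\|$ with weights in $k$ rather than $j$. Concretely, I would swap the order of summation to get $\sum_k\eta_k^2\sum_{j\le k}w_{t,j}(t-j)$, and then bound this by $\sum_{r}(1-\beta_1)\beta_1^{r} r^2\cdot\eta_0^2/(t-r)$. The factor $1/(1-\beta_1^t)$ is handled by noting that it is $\le 1/(1-\beta_1)$ only when $t$ is small, so the small-$t$ regime needs care. The moment identities $\sum_r r^2\beta_1^r=\beta_1(1+\beta_1)/(1-\beta_1)^3$, or the sharper combination $\beta_1^2(2-\beta_1)$ from the cross term, produce the stated factor $\beta_1\sqrt{2-\beta_1}/(1-\beta_1)^{3/2}$ after taking square roots.

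If matching the exact constant proves fiddly, I would settle for a constant of the same form. This is harmless downstream, since $C_{\mathrm m}$ is independent of $T$ and of $s$: only the bounds $\E u_k^2\le(C^2+\sigma^2)/\epsilon^2$ and $H_t\preceq\epsilon^{-1}I$ are used, and neither depends on the grouping.
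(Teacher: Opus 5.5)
Your treatment of the noise term and the final $(x+y)^2\le 2x^2+2y^2$ combination match the paper. The drift-term plan, however, has a genuine gap. The inequality you name as the ``main obstacle'' is false, so your route cannot deliver the constant $C_{\mathrm m}$ that the lemma asserts and that enters the explicit $K$ in Eq.~\ref{eq:adampm-K}.

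Take $t=2$. Your double sum is exactly $w_{2,1}\eta_0^2=\eta_0^2\beta_1/(1+\beta_1)$, which is of order $\beta_1$. The target $\eta_0^2\beta_1^2(2-\beta_1)/\bigl(2(1-\beta_1)^3\bigr)$ is of order $\beta_1^2$. For $\beta_1=0.1$ these are about $0.091\,\eta_0^2$ and $0.013\,\eta_0^2$.

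The loss comes from your first step. Jensen over the weights applied to $\|a_t\|_2^2$ replaces $w_{t,j}^2$ by $w_{t,j}$, and since the $j=t$ summand is zero, nothing compensates. No later manipulation (splitting at $t/2$, swapping sums, moment identities) can recover this, and the same holds for your ``alternative'', which comes after the same lossy steps.

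Your route does prove a weaker statement, $\E\|a_t\|_2^2\le C'/t$ with $C'$ independent of $T$ and $s$. Use $w_{t,j}\le\beta_1^{t-j}$, together with $\sum_{k=t-r}^{t-1}1/k\le r/(t-r)$ and $1/(t-r)\le(r+1)/t$. This gives $C'\propto\sum_{r\ge1}r^2(r+1)\beta_1^r$. The bound $w_{t,j}\le\beta_1^{t-j}$ holds for every $t\ge1$, because $1-\beta_1^t\ge1-\beta_1$, contrary to your remark that it holds only for small $t$. But this is not the lemma as stated.

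The missing idea is to keep the bound quadratic in the weights. Work with the $L^2(\Omega)$ norm $(\E\|\cdot\|_2^2)^{1/2}$ and Minkowski's inequality, instead of Jensen plus path-wise Cauchy--Schwarz. The paper proceeds in four steps:
\begin{enumerate}
\item Rewrite $a_t=\sum_{r=1}^{t-1}\bigl(\sum_{j\le r}w_{t,j}\bigr)\bigl(\nabla f(W_r)-\nabla f(W_{r+1})\bigr)$, and use $\sum_{j\le r}w_{t,j}\le\beta_1^{t-r}$.
\item Apply the triangle inequality for $(\E\|\cdot\|_2^2)^{1/2}$, smoothness, and $\E\|H_r\widehat M_r\|_2^2\le(C^2+\sigma^2)/\epsilon^2$. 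This gives $(\E\|a_t\|_2^2)^{1/2}\le\frac{L\eta_0\sqrt{C^2+\sigma^2}}{\epsilon}\sum_{r=1}^{t-1}\beta_1^{t-r}r^{-1/2}$.
\item Substitute $k=t-r$ and use $t/(t-k)\le k+1$. The sum is then at most $t^{-1/2}\sum_{k\ge1}\beta_1^k\sqrt{k+1}$.
\item Apply Cauchy--Schwarz: $\sum_{k\ge1}\beta_1^k\sqrt{k+1}\le\bigl(\sum_{k\ge1}\beta_1^k\bigr)^{1/2}\bigl(\sum_{k\ge1}(k+1)\beta_1^k\bigr)^{1/2}=\beta_1\sqrt{2-\beta_1}/(1-\beta_1)^{3/2}$. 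This yields exactly $C_{\mathrm m}$.
\end{enumerate}
Equivalently, apply Minkowski directly to $\sum_j w_{t,j}\bigl(\nabla f(W_j)-\nabla f(W_t)\bigr)$ and to the path sum, then swap the order of summation.
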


\begin{proof}
Expanding the gradient differences and exchanging finite sums gives
\[
a_t=\sum_{r=1}^{t-1}
\left(\sum_{j=1}^r w_{t,j}\right)
\bigl(\nabla f(W_r)-\nabla f(W_{r+1})\bigr).
\]
Moreover, for $r<t$,
\[
\sum_{j=1}^r w_{t,j}
=\beta_1^{t-r}\frac{1-\beta_1^r}{1-\beta_1^t}
\le\beta_1^{t-r}.
\]
Smoothness, the $L^2$ triangle inequality, and
Eq.~\ref{eq:adampm-moment-bounds} therefore imply
\begin{equation}
\begin{aligned}
\bigl(\E\|a_t\|_2^2\bigr)^{1/2}
&\le L\sum_{r=1}^{t-1}\eta_r
\left(\sum_{j=1}^r w_{t,j}\right)
\bigl(\E\|H_r\widehat M_r\|_2^2\bigr)^{1/2}\\
&\le\frac{L\sqrt{C^2+\sigma^2}}{\epsilon}
\sum_{r=1}^{t-1}\eta_r\beta_1^{t-r}.
\end{aligned}
\label{eq:adampm-drift-general}
\end{equation}
For $1\le k<t$, the inequality $t/(t-k)\le k+1$ gives
\[
\sum_{r=1}^{t-1}\frac{\beta_1^{t-r}}{\sqrt r}
\le\frac{1}{\sqrt t}
\sum_{k=1}^{\infty}\beta_1^k\sqrt{k+1}.
\]
By Cauchy--Schwarz and geometric-series identities,
\[
\begin{aligned}
\sum_{k=1}^{\infty}\beta_1^k\sqrt{k+1}
&\le
\left(\sum_{k=1}^{\infty}\beta_1^k\right)^{1/2}
\left(\sum_{k=1}^{\infty}(k+1)\beta_1^k\right)^{1/2}\\
&=\frac{\beta_1\sqrt{2-\beta_1}}{(1-\beta_1)^{3/2}}.
\end{aligned}
\]
This proves the drift bound.
Jensen's inequality gives
\[
\E\|n_t\|_2^2
\le\sum_{j=1}^t w_{t,j}\E\|\xi_j\|_2^2.
\]
Finally, use $\|a_t+n_t\|_2^2\le2\|a_t\|_2^2+2\|n_t\|_2^2$.
\end{proof}

\subsection{Proof of Theorem~\ref{thm:adampm-main}}
\label{app:adampm-main-proof}

\begin{proof}
Young's inequality in the inner product induced by $H_t$ yields
\begin{align}
\langle\nabla f(W_t),H_t\widehat M_t\rangle
&\ge\frac12\nabla f(W_t)^\top H_t\nabla f(W_t)
-\frac12 e_t^\top H_te_t\notag\\
&\ge\frac{\|\nabla f(W_t)\|_2^2-Z_t}{2(2C+\epsilon)}
-\frac{\|e_t\|_2^2}{2\epsilon}.
\label{eq:adampm-direction}
\end{align}
Applying smoothness to Eq.~\ref{eq:adampm-update} and taking expectations
therefore gives
\begin{equation}
\begin{aligned}
\frac{\eta_t}{2(2C+\epsilon)}\E\|\nabla f(W_t)\|_2^2
\le{}&\E[f(W_t)-f(W_{t+1})]
+\frac{\eta_t\E Z_t}{2(2C+\epsilon)}\\
&+\frac{\eta_t}{2\epsilon}\E\|e_t\|_2^2
+\frac{L(C^2+\sigma^2)}{2\epsilon^2}\eta_t^2.
\end{aligned}
\label{eq:adampm-one-step}
\end{equation}
Summing and using lower boundedness gives the common starting point for
both results:
\begin{equation}
\begin{aligned}
\sum_{t=1}^T\eta_t\E\|\nabla f(W_t)\|_2^2
\le{}&2(2C+\epsilon)\bigl(f(W_1)-f_*\bigr)
+\sum_{t=1}^T\eta_t\E Z_t\\
&+\frac{2C+\epsilon}{\epsilon}
\sum_{t=1}^T\eta_t\E\|e_t\|_2^2\\
&+\frac{L(2C+\epsilon)(C^2+\sigma^2)}{\epsilon^2}
\sum_{t=1}^T\eta_t^2.
\end{aligned}
\label{eq:adampm-master}
\end{equation}
Substituting $\E Z_t\le\sigma^2$ and
Eq.~\ref{eq:adampm-error-bound}, we obtain
\begin{equation}
\begin{aligned}
\sum_{t=1}^T\eta_t\E\|\nabla f(W_t)\|_2^2
\le{}&2(2C+\epsilon)\bigl(f(W_1)-f_*\bigr)+\left(3+\frac{4C}{\epsilon}\right)\sigma^2
\sum_{t=1}^T\eta_t\\
&\quad+\frac{2(2C+\epsilon)C_{\mathrm m}^2}{\epsilon}
\sum_{t=1}^T\frac{\eta_t}{t}\\
&\quad+\frac{L(2C+\epsilon)(C^2+\sigma^2)}{\epsilon^2}
\sum_{t=1}^T\eta_t^2.
\end{aligned}
\label{eq:adampm-bounded-sum}
\end{equation}
For $\eta_t=\frac{\eta_0}{\sqrt t}$, elementary sum bounds give
\begin{equation}
\begin{gathered}
\sum_{t=1}^T\eta_t\ge\eta_0\sqrt T,\quad
\sum_{t=1}^T\frac{\eta_t}{t}\le3\eta_0,\\
\sum_{t=1}^T\eta_t^2\le\eta_0^2(1+\log T).
\end{gathered}
\label{eq:adampm-sums}
\end{equation}
Indeed, $t^{-1/2}\ge T^{-1/2}$ for $t\le T$,
$\sum_{t\ge1}t^{-3/2}\le1+\int_1^\infty x^{-3/2}\,dx=3$,
and $\sum_{t=1}^Tt^{-1}\le1+\log T$.
Divide Eq.~\ref{eq:adampm-bounded-sum} by $\sum_{t=1}^T\eta_t$ and use
\[
\min_{1\le t\le T}\E\|\nabla f(W_t)\|_2^2
\le
\frac{\sum_{t=1}^T\eta_t\E\|\nabla f(W_t)\|_2^2}
{\sum_{t=1}^T\eta_t}.
\]
Equation~\ref{eq:adampm-main-rate} follows with
\begin{equation}
\begin{aligned}
K={}&
\frac{2(2C+\epsilon)\bigl(f(W_1)-f_*\bigr)}{\eta_0}
+\frac{6(2C+\epsilon)C_{\mathrm m}^2}{\epsilon}\\
&+\frac{L\eta_0(2C+\epsilon)(C^2+\sigma^2)}{\epsilon^2}.
\end{aligned}
\label{eq:adampm-K}
\end{equation}
For common assumption constants and initial objective gap, this expression
is independent of $T$, $\beta_2$, and the fixed grouping.
\end{proof}

\subsection{A weighted-history bound}

\begin{lemma}
\label{lem:adampm-history}
For $\beta\in[0,1)$, set
\[
w_{t,j}^{(\beta)}
=\frac{(1-\beta)\beta^{t-j}}{1-\beta^t}.
\]
For $\eta_t=\frac{\eta_0}{\sqrt t}$ and any nonnegative sequence $(u_j)$,
\begin{equation}
\sum_{t=1}^T\eta_t\sum_{j=1}^t w_{t,j}^{(\beta)}u_j
\le4\sum_{j=1}^T\eta_j u_j.
\label{eq:adampm-history}
\end{equation}
The constant is uniform over $\beta\in[0,1)$.
\end{lemma}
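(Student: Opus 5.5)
We swap the order of summation and reduce the claim to a bound that involves only the weights and the step sizes. Since every term is nonnegative, Tonelli (here just finite rearrangement) gives
\[
\sum_{t=1}^T\eta_t\sum_{j=1}^t w_{t,j}^{(\beta)}u_j
=\sum_{j=1}^T u_j\sum_{t=j}^T\eta_t w_{t,j}^{(\beta)}.
\]
It therefore suffices to show that, for every $j\ge1$ and uniformly in $T$ and $\beta\in[0,1)$,
\[
\sum_{t\ge j}\eta_t w_{t,j}^{(\beta)}\le 4\eta_j,
\]
that is, $\sum_{t\ge j}\sqrt{j/t}\,\frac{(1-\beta)\beta^{t-j}}{1-\beta^t}\le4$. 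The case $\beta=0$ is trivial: only $t=j$ contributes, with weight $1$.

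For $\beta\in(0,1)$ the difficulty is the bias-correction factor $1/(1-\beta^t)$. This factor can be large when $t$ is small and $\beta$ is close to $1$. For large $t$ it is harmless: without it, the sum is at most $\sum_{k\ge0}(1-\beta)\beta^k=1$. We split according to whether $t\ge 2j$ or $j\le t<2j$, writing $t=j+k$.

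\emph{Range $t\ge 2j$.} Here we use $\sqrt{j/t}\le1$ and $1-\beta^t\ge1-\beta^{2j}\ge1-\beta^{j}$. Combined with $\sum_{k\ge j}(1-\beta)\beta^k=\beta^j$, the contribution of this range is at most $\beta^j/(1-\beta^j)$. That is not bounded uniformly, so this crude treatment is too lossy. We would instead keep the factor $\sqrt{j/t}$, which decays, and bound $\frac{1-\beta}{1-\beta^t}\le\frac1t$ after noting that $1-\beta^t\ge t\beta^{t-1}(1-\beta)$. The key elementary inequality is
\[
\frac{(1-\beta)\beta^{t-j}}{1-\beta^t}\le \min\Bigl\{\frac{1}{t}\beta^{1-j}\cdot\beta^{t-1}\cdot\frac{1}{\beta^{t-1}},\ \dots\Bigr\},
\]
which we would simplify using $1-\beta^t=(1-\beta)\sum_{i<t}\beta^i\ge(1-\beta)\,t\,\beta^{t-1}$. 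This gives
\[
w_{t,j}^{(\beta)}\le\frac{\beta^{t-j}}{t\beta^{t-1}}=\frac{\beta^{1-j}}{t},
\]
which is bad for large $j$. A better route is the bound $\sum_{i<t}\beta^i\ge\frac{t}{j}\sum_{i<j}\beta^i$, valid because the averages of a decreasing sequence decrease. It yields
\[
w_{t,j}^{(\beta)}\le \frac{j}{t}\,w_{j,j}^{(\beta)}\beta^{t-j}.
\]
This is the main obstacle, and I would try this monotone-average inequality first.

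\emph{Combining the two factors.} Using the inequality above, the remaining sum is at most
\[
w_{j,j}^{(\beta)}\sum_{k\ge0}(j/(j+k))^{3/2}\beta^k.
\]
We then bound this by separating the two regimes of $\beta$:
\begin{itemize}
\item If $\beta^j\le1/2$, then $w_{j,j}\le 2(1-\beta)$, and the geometric sum gives at most $2$.
\item If $\beta^j>1/2$, then $1-\beta\le(\log 2)/j$, and we use $w_{j,j}\le 1/(j\beta^{j-1})\le 2/j$ together with $\sum_{k\ge0}(j/(j+k))^{3/2}\le 1+\int_0^\infty(1+x/j)^{-3/2}dx=1+2j$. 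Multiplying gives at most $2/j+4$.
\end{itemize}
This needs tightening to $4$. I would refine it by using $\beta^k$ for $k\gtrsim j$ and $(j/(j+k))^{3/2}$ otherwise, or by accepting a slightly different absolute constant and then checking the stated value $4$ via an integral comparison.
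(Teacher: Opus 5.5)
\paragraph{Gap: the key inequality is reversed.} Your exchange of sums and the reduction to $\sum_{t\ge j}\eta_t w_{t,j}^{(\beta)}\le 4\eta_j$ match the paper. The step you single out as the main obstacle is where the argument breaks.

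The running averages of a decreasing sequence do decrease. But this means that for $t\ge j$ one has $\frac1t\sum_{i<t}\beta^i\le\frac1j\sum_{i<j}\beta^i$, i.e.\ $\sum_{i<t}\beta^i\le\frac tj\sum_{i<j}\beta^i$. That is the opposite of what you use. Consequently $w_{t,j}^{(\beta)}\le\frac jt\,w_{j,j}^{(\beta)}\beta^{t-j}$ is false:
\begin{itemize}
\item For $\beta=\frac12$, $j=1$, $t=2$, the left side is $\frac13$ and the right side is $\frac14$.
\item In general, as $t\to\infty$ the left side behaves like $(1-\beta)\beta^{t-j}$, while your bound carries an extra factor $j/t\to0$.
\end{itemize}
Your whole ``combining the two factors'' paragraph rests on this inequality. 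Even granting it, you only reach $2/j+4$ (which is $6$ at $j=1$), so the constant $4$ is never established.

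\paragraph{Missing idea: split off the uncorrected EMA weight.} Do not bound the whole bias-corrected weight at once. Write
\[
w_{t,j}^{(\beta)}=(1-\beta)\beta^{t-j}+\beta^{t-j}\,\frac{(1-\beta)\beta^t}{1-\beta^t}.
\]
Your correct inequality $1-\beta^t\ge t\beta^{t-1}(1-\beta)$ should be applied only to the correction term. There the extra factor $\beta^t$ cancels the $\beta^{t-1}$, giving
\[
\frac{(1-\beta)\beta^t}{1-\beta^t}\le\frac{\beta}{t}\le\frac1t .
\]
Hence $w_{t,j}^{(\beta)}\le(1-\beta)\beta^{t-j}+\frac1t$, with none of the $\beta^{1-j}$ blow-up you correctly flagged. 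Now use $\eta_t\le\eta_j$ on the first part and $\eta_t/t=\eta_0t^{-3/2}$ on the second, together with an integral comparison:
\[
\sum_{t\ge j}\eta_t w_{t,j}^{(\beta)}\le\eta_j+\eta_0\Bigl(j^{-3/2}+2j^{-1/2}\Bigr)=\eta_j\Bigl(1+\frac1j+2\Bigr)\le4\eta_j .
\]
This holds uniformly in $\beta\in(0,1)$ and $T$, and is exactly the paper's argument. The case $\beta=0$ is trivial, as you noted.
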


\begin{proof}
For $0<\beta<1$,
\[
w_{t,j}^{(\beta)}
=(1-\beta)\beta^{t-j}
+\beta^{t-j}\frac{(1-\beta)\beta^t}{1-\beta^t}.
\]
Since
\[
\frac{(1-\beta)\beta^t}{1-\beta^t}
=\frac{\beta^t}{\sum_{r=0}^{t-1}\beta^r}
\le\frac{\beta}{t}\le\frac1t,
\]
we have $w_{t,j}^{(\beta)}\le(1-\beta)\beta^{t-j}+1/t$.
Using the monotonicity of $\eta_t$ and an integral bound,
\begin{align*}
\sum_{t=j}^T\eta_t w_{t,j}^{(\beta)}
&\le\eta_j(1-\beta)\sum_{r=0}^{\infty}\beta^r
+\eta_0\sum_{t=j}^{\infty}t^{-3/2}\\
&\le\eta_j+\eta_0\bigl(j^{-3/2}+2j^{-1/2}\bigr)
\le4\eta_j.
\end{align*}
Exchanging the finite sums proves Eq.~\ref{eq:adampm-history}.
\end{proof}

\subsection{Proof of Corollary~\ref{cor:adampm-relative}}

\begin{proof}
Let
\[
S_T=\sum_{t=1}^T\eta_t\E\|\nabla f(W_t)\|_2^2.
\]
Taking expectations in Eq.~\ref{eq:adampm-relative-condition} gives
$\E\|\xi_t\|_2^2\le\kappa\E\|\nabla f(W_t)\|_2^2$.
Apply Lemma~\ref{lem:adampm-history} to the second-moment weights and then
to the first-moment weights in Eq.~\ref{eq:adampm-error-components}:
\begin{equation}
\sum_{t=1}^T\eta_t\E Z_t\le4\kappa S_T,
\quad
\sum_{t=1}^T\eta_t\E\|n_t\|_2^2\le4\kappa S_T.
\label{eq:adampm-relative-histories}
\end{equation}
Combining these bounds with the drift estimate yields
\begin{equation}
\begin{aligned}
\sum_{t=1}^T\eta_t\E\|e_t\|_2^2
&\le2C_{\mathrm m}^2\sum_{t=1}^T\frac{\eta_t}{t}+8\kappa S_T\\
&\le6\eta_0C_{\mathrm m}^2+8\kappa S_T.
\end{aligned}
\label{eq:adampm-relative-error}
\end{equation}
Substitution into Eq.~\ref{eq:adampm-master}, before replacing
$\E Z_t$ by a uniform variance bound, gives
\begin{equation}
\begin{aligned}
&\left[1-4\kappa\left(3+\frac{4C}{\epsilon}\right)\right]S_T\\
&\quad\le2(2C+\epsilon)\bigl(f(W_1)-f_*\bigr)
+\frac{6\eta_0(2C+\epsilon)C_{\mathrm m}^2}{\epsilon}\\
&\qquad+\frac{L(2C+\epsilon)(C^2+\sigma^2)}{\epsilon^2}
\sum_{t=1}^T\eta_t^2.
\end{aligned}
\label{eq:adampm-relative-absorption}
\end{equation}
The coefficient on the left is positive by
Eq.~\ref{eq:adampm-relative-condition}.
The right-hand side is at most $\eta_0K(1+\log T)$ by
Eqs.~\ref{eq:adampm-sums} and~\ref{eq:adampm-K}.
Dividing by this coefficient and by $\sum_{t=1}^T\eta_t$ proves
Eq.~\ref{eq:adampm-relative-rate}.
\end{proof}

\section{Optimizer Instantiation}
\label{app:optimizer_instantiation}

The following code illustrates how to instantiate the optimizers used in our experiments, highlighting AdamPM's ease of use.

\begin{lstlisting}[
language=Python,
backgroundcolor=\color{gray!10},
basicstyle=\ttfamily\normalsize,
frame=single,
rulecolor=\color{gray!30},
linewidth=\linewidth,
framesep=3pt,
framerule=0.4pt,
xleftmargin=3.4pt,
xrightmargin=3.4pt,
framexleftmargin=0pt,
framexrightmargin=0pt,
breaklines=true,
showstringspaces=false,
escapeinside={(*@}{@*)}
]
# Adam
optimizer = torch.optim.Adam(
  (*@\textcolor{blue!100}{\strut\texttt{trainable\_params}}@*),
  lr=args.lr,
  betas=(args.beta1, args.beta2),
)


# AdamPM
optimizer = AdamPM(
  (*@\textcolor{blue!100}{\strut\texttt{trainable\_params}}@*),
  lr=args.lr,
  betas=(args.beta1, args.beta2),
)


# Adam-mini
optimizer = Adam_mini(
  (*@\textcolor{orange!100}{\strut\texttt{model.named\_parameters()}}@*),
  lr=args.lr,
  betas=(args.beta1, args.beta2),
  (*@\textcolor{red!100}{\strut\texttt{dim=model\_config.hidden\_size}}@*),
  (*@\textcolor{red!100}{\strut\texttt{n\_heads=model\_config.num\_attention\_heads}}@*),
)


# FOAM
optimizer = FOAM(
  (*@\textcolor{green!100}{\strut\texttt{param\_groups}}@*),
  lr=args.lr,
  betas=(args.beta1, args.beta2),
  (*@\textcolor{red!100}{\strut\texttt{res\_scale=args.res\_scale}}@*),
  (*@\textcolor{red!100}{\strut\texttt{no\_norm\_limit=args.no\_norm\_limit}}@*),
)


# GaLore
optimizer = GaLoreAdamW(
  (*@\textcolor{green!100}{\strut\texttt{param\_groups}}@*),
  lr=args.lr,
  betas=(args.beta1, args.beta2),
)
\end{lstlisting}

\section{Details of the experiments}

\setlength{\intextsep}{12pt plus 2pt minus 2pt}

\subsection{Details of LLaMA pre-training}
\label{app:LLaMA-Pre-Traininng}

\paragraph{Hyperparameters.}

Table~\ref{table:LLaMAHyperparameters} summarizes the architectural and optimization hyperparameters for pre-training LLaMA models of different sizes.
For each model size, Adam, Adam-mini, and AdamPM use identical hyperparameter settings, regardless of the AdamPM variant or group size.
The learning rates for the 60M, 130M, 350M, and 1.3B models are $4\times10^{-3}$, $2\times10^{-3}$, $6\times10^{-4}$, and $3\times10^{-4}$, respectively.
These values are selected for their robustness based on learning-rate sweeps for Adam, such as those shown in Figure~\ref{figLearningRateRange}.
Across all model sizes, we set $\beta_1=0.9$, $\beta_2=0.95$, and $\epsilon=10^{-8}$.

We employ a cosine learning rate decay schedule~\citep{a:75} with learning rate warmup~\citep{a:76}. The first 10\% of the total training steps are used for warmup, followed by cosine decay for the remaining steps. Weight decay is disabled in all experiments.

\begin{table}[h]
\caption{The hyperparameters for LLaMA pre-training.}
\label{table:LLaMAHyperparameters}
\centering
\begin{tabularx}{\linewidth}{
    @{}
    c
    >{\centering\arraybackslash}X
    >{\centering\arraybackslash}X
    >{\centering\arraybackslash}X
    >{\centering\arraybackslash}X
    >{\centering\arraybackslash}X
    @{}
}
\toprule
Model Size              & \textbf{60M} & \textbf{130M} & \textbf{350M} & \textbf{1.3B} \\
\midrule
Hidden dimension        & 512               & 768                & 1024               & 2048               \\
Intermediate dimension  & 1376              & 2048               & 2736               & 5461               \\
Attention Heads         & 8                 & 12                 & 16                 & 32                 \\  
Decoder layers          & 8                 & 12                 & 24                 & 24                 \\
Learning Rate           & $4\times10^{-3}$  &  $2\times10^{-3}$  &  $6\times10^{-4}$  &  $3\times10^{-4}$  \\
$\beta_1$               & \multicolumn{4}{c}{$0.9$}     \\
$\beta_2$               & \multicolumn{4}{c}{$0.95$}    \\
$\epsilon$              & \multicolumn{4}{c}{$10^{-8}$} \\
Weight Decay            & \multicolumn{4}{c}{$0$} \\
Learning Rate Scheduler & \multicolumn{4}{c}{$Cosine$} \\
Warmup                  & \multicolumn{4}{c}{$10\%$} \\
Max seq. len.          & \multicolumn{4}{c}{$256$} \\
\bottomrule
\end{tabularx}
\end{table}

\paragraph{Training Efficiency.}
The implementation of AdamPM performs group-wise second moment pooling entirely through matrix-level tensor operations, including reshaping, averaging, and broadcasting, enabling efficient vectorized computation without element-wise Python \texttt{for} loops.
Its computational cost is therefore nearly identical to that of vanilla Adam.
In the pre-training of LLaMA 1.3B with a batch size of 32 and 4$\times$ NVIDIA A100 40GB GPUs, vanilla Adam and AdamPM require 116.2 and 116.4 hours, respectively, confirming that AdamPM introduces negligible training-time overhead.
Moreover, the reduced memory footprint may allow a larger batch size under the same hardware constraints, improving hardware utilization and further accelerating training.

\paragraph{Actual training memory usage.}
We measure the actual training memory consumption per GPU for Adam and AdamPM during LLaMA-1.3B pre-training on NVIDIA A100 40GB GPUs with a batch size of 32. The results are reported in Table~\ref{table:TrueMemory}.

\begin{table}[htbp]
\centering
\caption{Measured memory usage per GPU during LLaMA-1.3B pre-training.}
\label{table:TrueMemory}
\begin{tabular}{lcccc}
\toprule
Optimizer & Memory Cost \\
\midrule
Adam                &   38.11 G\\
\textbf{AdamPM-2}   &   36.73 G\\
\textbf{AdamPM-4}   &   35.96 G\\
\textbf{AdamPM-8}   &   35.43 G\\
\textbf{AdamPM-16}  &   35.19 G\\
\textbf{AdamPM-inf} &   35.14 G\\
\bottomrule
\end{tabular}
\end{table}

\paragraph{Memory evaluation on LLaMA 7B.}
Due to limited computational resources, we do not perform a complete pre-training run on the LLaMA 7B model. Instead, we evaluate the static memory footprint of AdamPM at the 7B scale, including the model parameters and optimizer states under BF16 precision. Adam requires approximately 39.12 G, whereas AdamPM-2 and AdamPM-inf require only approximately 32.60 G and 26.08 G, respectively.

\subsection{Details of Fine-Tuning on GLUE}
\label{app:hyperparameters of fine-tuning}

For each GLUE task, we select the best-performing learning rate for Adam when fine-tuning RoBERTa-base by searching over $\{1\times10^{-5}, 2\times10^{-5}, 3.5\times10^{-5}, 5\times10^{-5}, 7.5\times10^{-5}, 1\times10^{-4}\}$. We use the selected task-specific learning rate for AdamPM and Adam-mini without further tuning and report the performance of all three optimizers. The fine-tuning hyperparameters are summarized in Table~\ref{tab:roberta_glue_hparams}.

\begin{table}[h]
\caption{Hyperparameters for fine-tuning RoBERTa-base on GLUE.}
\label{tab:roberta_glue_hparams}
\begin{center}
\renewcommand{\arraystretch}{1.5}
\begin{tabular*}{\linewidth}{c c c c c c c c c}
\hline
\textbf{Hyperparameters} & \textbf{MNLI} & \textbf{SST-2} & \textbf{MRPC} & \textbf{CoLA} &
   \textbf{QNLI} & \textbf{QQP} & \textbf{RTE} & \textbf{STS-B} \\
\hline
Batch size & 64 & 32 & 16 & 32 & 64 & 64 & 16 & 32 \\
Epochs & 30 & 30 & 30 & 30 & 30 & 30 & 30 & 30 \\
$lr$  & 2e-5 & 1e-5 & 2e-5 & 2e-5 & 1e-5 & 2e-5 & 2e-5 & 2e-5 \\
$\beta_1$ & \multicolumn{8}{c}{$0.9$} \\
$\beta_2$ & \multicolumn{8}{c}{$0.95$} \\
Max seq. len. & \multicolumn{8}{c}{$512$} \\
\hline
\end{tabular*}
\end{center}
\end{table}

\subsection{Details of TinyLLaVA-1B Fine-Tuning}
\label{app:tinyllava_setup}

Following the setup described in the main text, all methods are initialized from the same pretrained checkpoint and differ only in the SFT optimizer. We perform SFT for one epoch with a global batch size of 128, a learning rate of $2\times10^{-5}$, $\beta_1=0.9$, $\beta_2=0.95$, and a maximum sequence length of 2048. Both the training seed and data-order seed are fixed to 42.








\end{document}